\definecolor{blue_gp}{rgb}{0.122 0.467 0.706}
\definecolor{orange_mc_drop}{rgb}{1.00 0.498 0.055}
\definecolor{green_pne}{rgb}{0.173 0.627 0.173}
\definecolor{brown_nflows}{rgb}{0.549, 0.337, 0.294}
\definecolor{red_nflows_base}{rgb}{0.839 0.153 0.157}
\definecolor{purple_nflows_out}{rgb}{0.58 0.404 0.741}
\DeclarePairedDelimiterX{\infdivx}[2]{(}{)}{%
  #1\;\delimsize\|\;#2%
}
\newcommand*{\addFileDependency}[1]{
  \typeout{(#1)}
  \@addtofilelist{#1}
  \IfFileExists{#1}{}{\typeout{No file #1.}}
}
\newcommand*{\myexternaldocument}[1]{
    \externaldocument{#1}
    \addFileDependency{#1.tex}
    \addFileDependency{#1.aux}
}
\title{Normalizing Flow Ensembles for Rich Aleatoric and Epistemic Uncertainty Modeling}
\author {
    Lucas Berry,\textsuperscript{\rm 1}
    David Meger \textsuperscript{\rm 1}
}
\theoremstyle{plain}
\newtheorem{theorem}{Theorem}[section]
\theoremstyle{definition}
\theoremstyle{remark}
\begin{document}

\maketitle

\begin{abstract}
 In this work, we demonstrate how to reliably estimate epistemic uncertainty while maintaining the flexibility needed to capture complicated aleatoric distributions. To this end, we propose an ensemble of Normalizing Flows (NF), which are state-of-the-art in modeling aleatoric uncertainty. The ensembles are created via sets of fixed dropout masks, making them less expensive than creating separate NF models. We demonstrate how to leverage the unique structure of NFs, base distributions, to estimate aleatoric uncertainty without relying on samples, provide a comprehensive set of baselines, and derive unbiased estimates for differential entropy. The methods were applied to a variety of experiments, commonly used to benchmark aleatoric and epistemic uncertainty estimation: 1D sinusoidal data, 2D windy grid-world (\textit{Wet Chicken}), \textit{Pendulum}, and \textit{Hopper}. In these experiments, we setup an active learning framework and evaluate each model's capability at measuring aleatoric and epistemic uncertainty. The results show the advantages of using NF ensembles in capturing complicated aleatoric while maintaining accurate epistemic uncertainty estimates.
\end{abstract}
\section{Introduction}
One common decomposition of uncertainty is aleatoric and epistemic \cite{hora1996aleatory,der2009aleatory, hullermeier2021aleatoric}. Aleatoric uncertainty refers to the inherent randomness in the outcome of an experiment, while epistemic uncertainty can be described as ignorance or a lack of knowledge. The important distinction between the two is that epistemic uncertainty can be reduced by the acquisition of more data while aleatoric cannot. Our goal in this paper is to learn  aleatoric distributions in high dimensions, with arbitrary distributional form, while also tracking epistemic uncertainty due to non-uniform data sampling.

Normalizing Flows (NFs) have been shown to be effective at capturing highly expressive aleatoric uncertainty with little prior knowledge \cite{kingma2018glow, nf-rezende15}. This is done by transforming a base distribution via a series of nonlinear bijective mappings, and can model complex heteroscedastic and multi-modal noise. Robotic systems display such noise, as robots interact with nonlinear stochastic dynamics often in high dimensions. To further complicate the problem of dynamics modeling, data collection for many robot systems can be prohibitively expensive. Therefore, an active learning framework is usually adopted to iteratively collect data in order to most efficiently improve a model. In order to apply such a framework, ensembles have been employed to capture epistemic uncertainty for deep learning models \cite{gal2017deep}.   

In this paper, we utilize NFs' ability to capture rich aleatoric uncertainty and extend such models to epistemic uncertainty estimation with ensembles. We then use our NF models to tackle epistemic uncertainty for regression tasks. The contributions of this work are as follows:
\begin{itemize}
    \item We develop two methods for estimating uncertainty for NFs, derive unbiased estimates for said models, and leverage the base distribution to reduce the sampling burden on the estimation of uncertainty.
    \item We leverage memory-efficient ensembling by creating ensembles via fixed dropout masks and apply them to NFs.
    \item We demonstrate the usefulness of uncertainty estimation on an array of previously proposed benchmarks \cite{depeweg2018decomposition} and novel settings for active learning problems.
\end{itemize}
\section{Problem Statement}
Given a dataset $\mathcal{D}=\{x_i, y_i\}_{i=1}^N$, where $x_i\in\mathbb{R}^K$ and $y_i\in\mathbb{R}^D$, we wish to approximate the conditional probability $p_{Y|X}(y|x)$. Our models therefore take as input $x$ and output the approximate conditional probability density function, $p_{Y|X}(y|x)=f_{\theta}(y,x)$, where $\theta$ is a set of parameters to be learned. In the experiments that follow, the ground truth distribution, $p_{Y|X}(y|x)$, is not assumed homoscedastic nor do we put restrictions on its shape. 

In order to capture uncertainty, we utilize the information based criteria proposed in \citet{houlsby2011bayesian} to estimate uncertainty. Let $H(\cdot)$ denote the differential entropy of a random variable, $H(X)= E\left[-\log(p_X(x))\right]$, and $W \sim p(w)$ index different models in a Bayesian framework or an ensemble. Then one can define epistemic uncertainty, $I(y^*, W)$, as the difference of total, $H(y^*|x^*)$, and aleatoric, $E_{p(w)}\left[H(y^*|x^*,w)\right]$, uncertainty,
\begin{align}
    I(y^*, W) &= H(y^*|x^*)-E_{p(w)}\left[H(y^*|x^*,w)\right] \label{eq:epi},
\end{align}
where $I(\cdot)$ is the mutual information (MI) and $y^*$, $x^*$ denote values not seen during training. 

These uncertainty estimates can be used for the problem of active learning, where we iteratively add data points to our training dataset in order to improve our model's performance as much as possible with each data point selected \cite{settles2009active, mackay1992information}.  When doing active learning, one attempts to find the $x^*$ that maximizes Equation (\ref{eq:epi}) and query new points in that region. Intuitively, we are looking for the point $x^*$ which has high total uncertainty, $H(y^*|x^*)$, but each model has low uncertainty, $H(y^*|x^*,w)$.

\section{Background} \label{sec:background}
NFs were chosen to capture aleatoric uncertainty as they are capable of representing flexible distributions. Ensembles were chosen as they are a computationally efficient way of estimating epistemic uncertainty for deep learning. 
\begin{figure*}[t]
\centering
\begin{tikzpicture}[scale=0.85]
\node[rectangle,draw] (input) at (0,0) {Input};
\node[inner sep=0pt] (base0) at (4,3)
    {\includegraphics[width=.30\columnwidth]{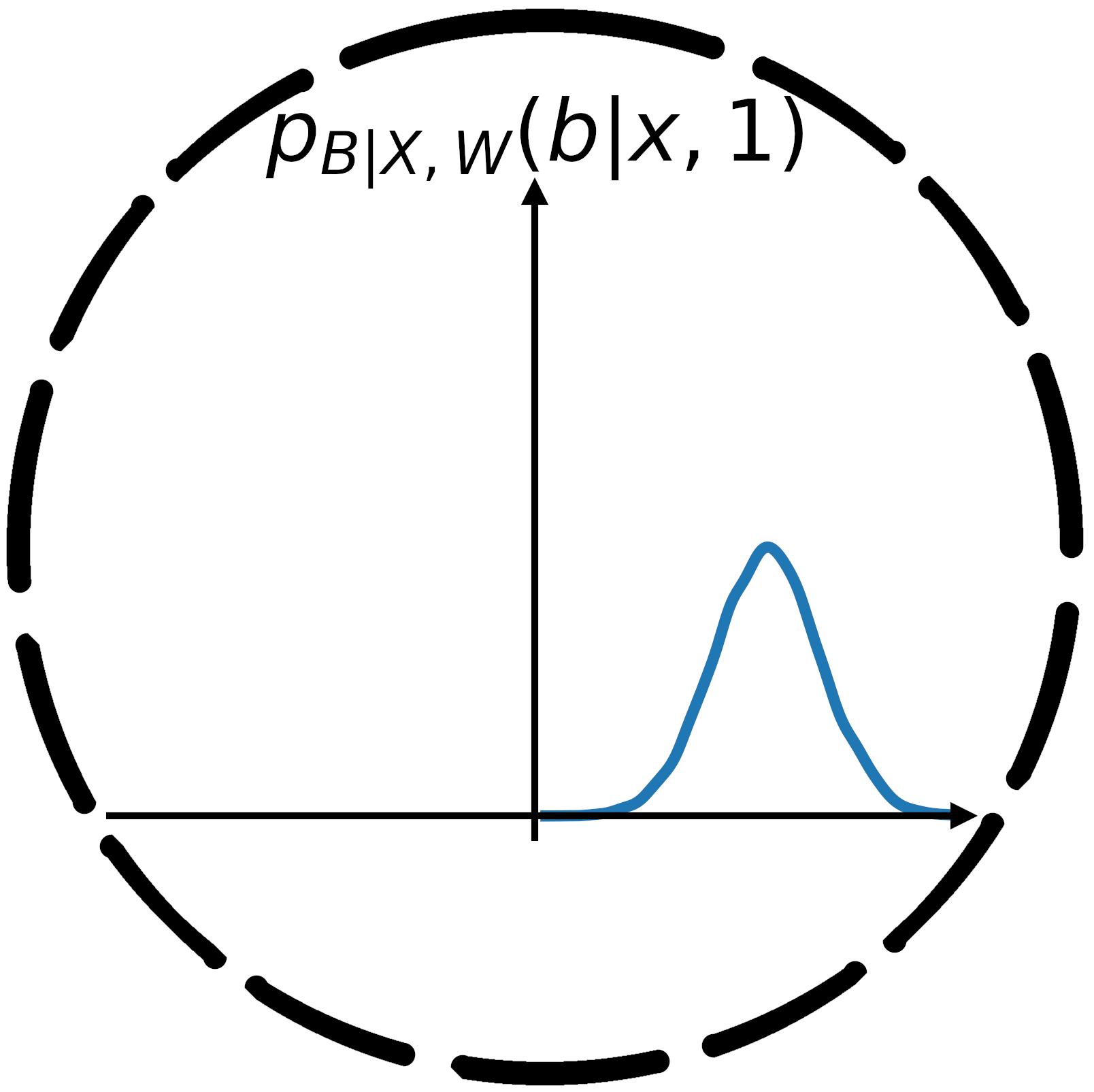}};
\node[inner sep=0pt] (base1) at (4,0)
    {\includegraphics[width=.30\columnwidth]{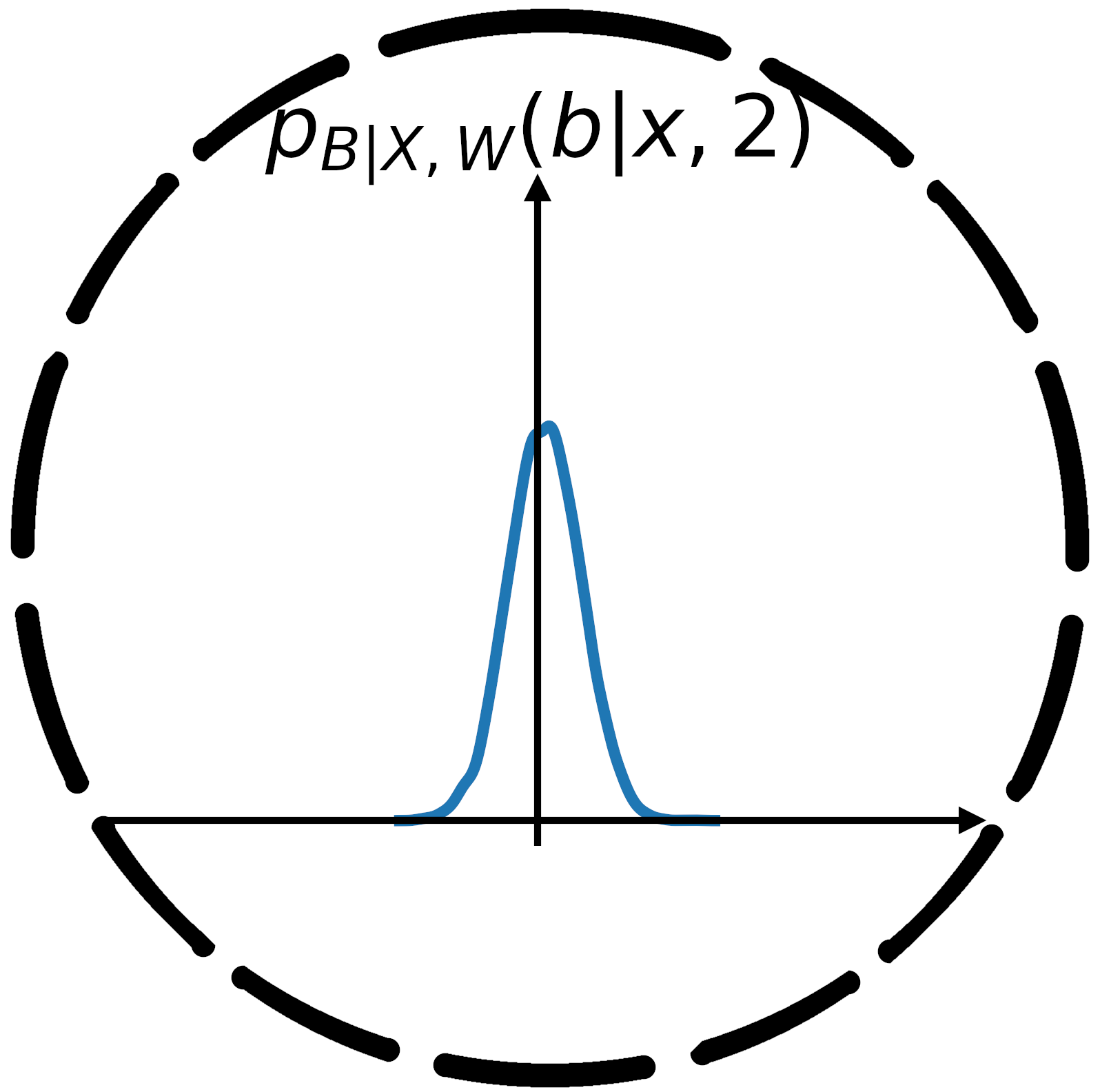}};
\node[inner sep=0pt] (basen) at (4,-3)
    {\includegraphics[width=.30\columnwidth]{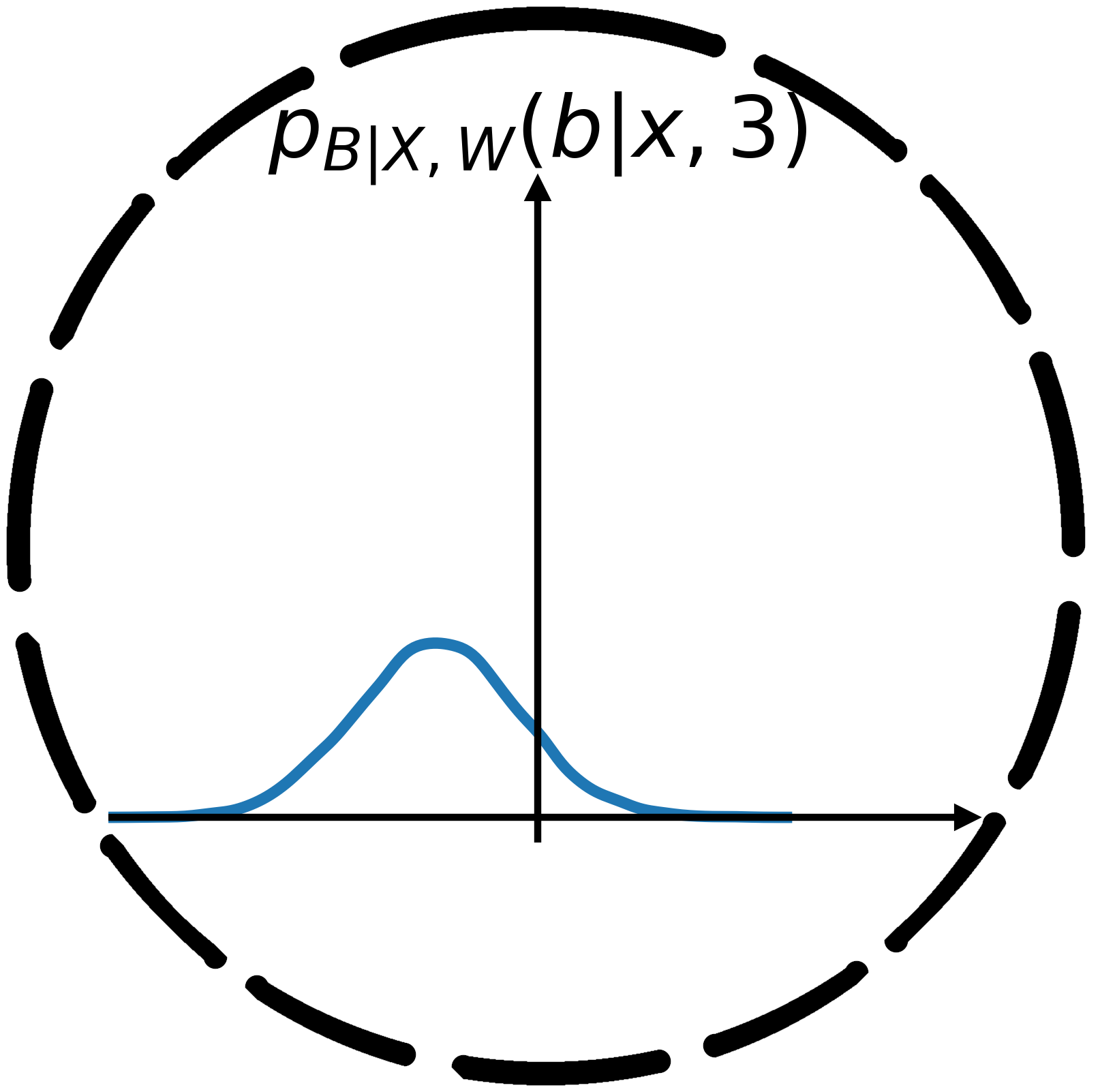}};
\draw[->,thick] (input) --  node [above, rotate=35]  {$N(\mu_1,\Sigma_1)$}(base0);
\draw[->,thick] (input) -- node [above] {$N(\mu_2,\Sigma_2)$}(base1);
\draw[->,thick] (input) -- node [above, rotate=-35]  {$N(\mu_3,\Sigma_3)$}(basen);

\node[inner sep=0pt] (out0) at (8,3)
    {\includegraphics[width=.30\columnwidth]{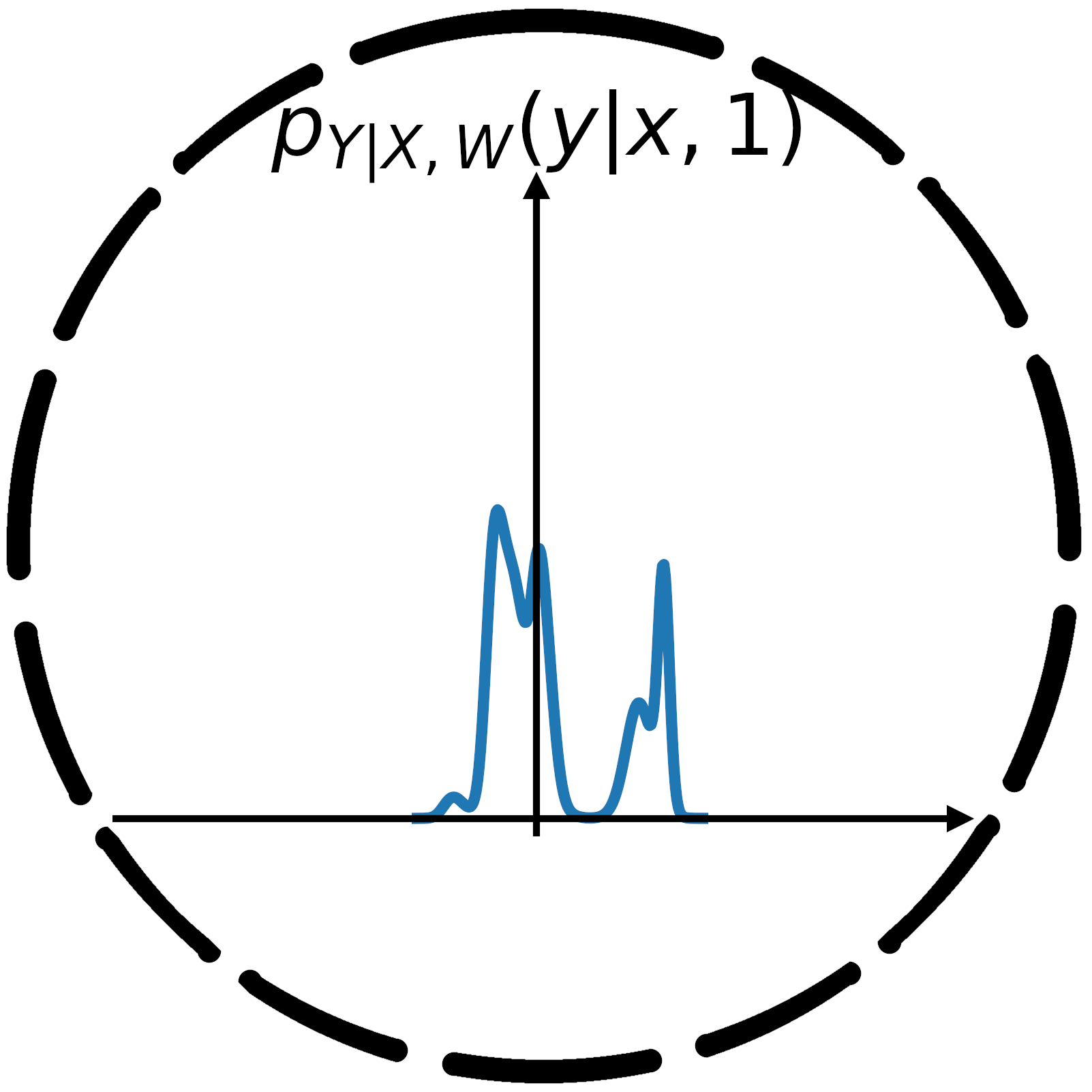}};
\node[inner sep=0pt] (out1) at (8,0)
    {\includegraphics[width=.30\columnwidth]{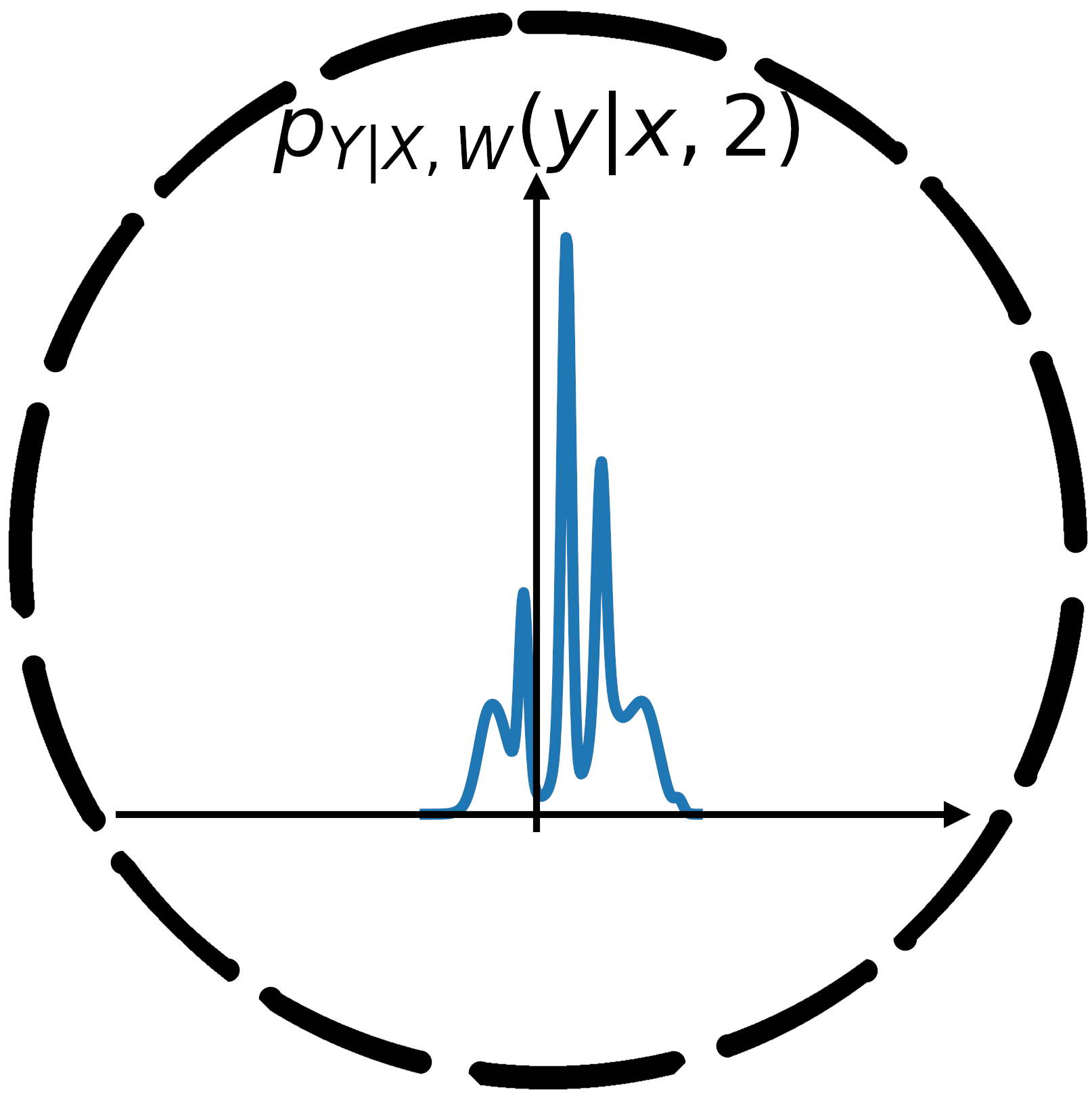}};
\node[inner sep=0pt] (outn) at (8,-3)
    {\includegraphics[width=.30\columnwidth]{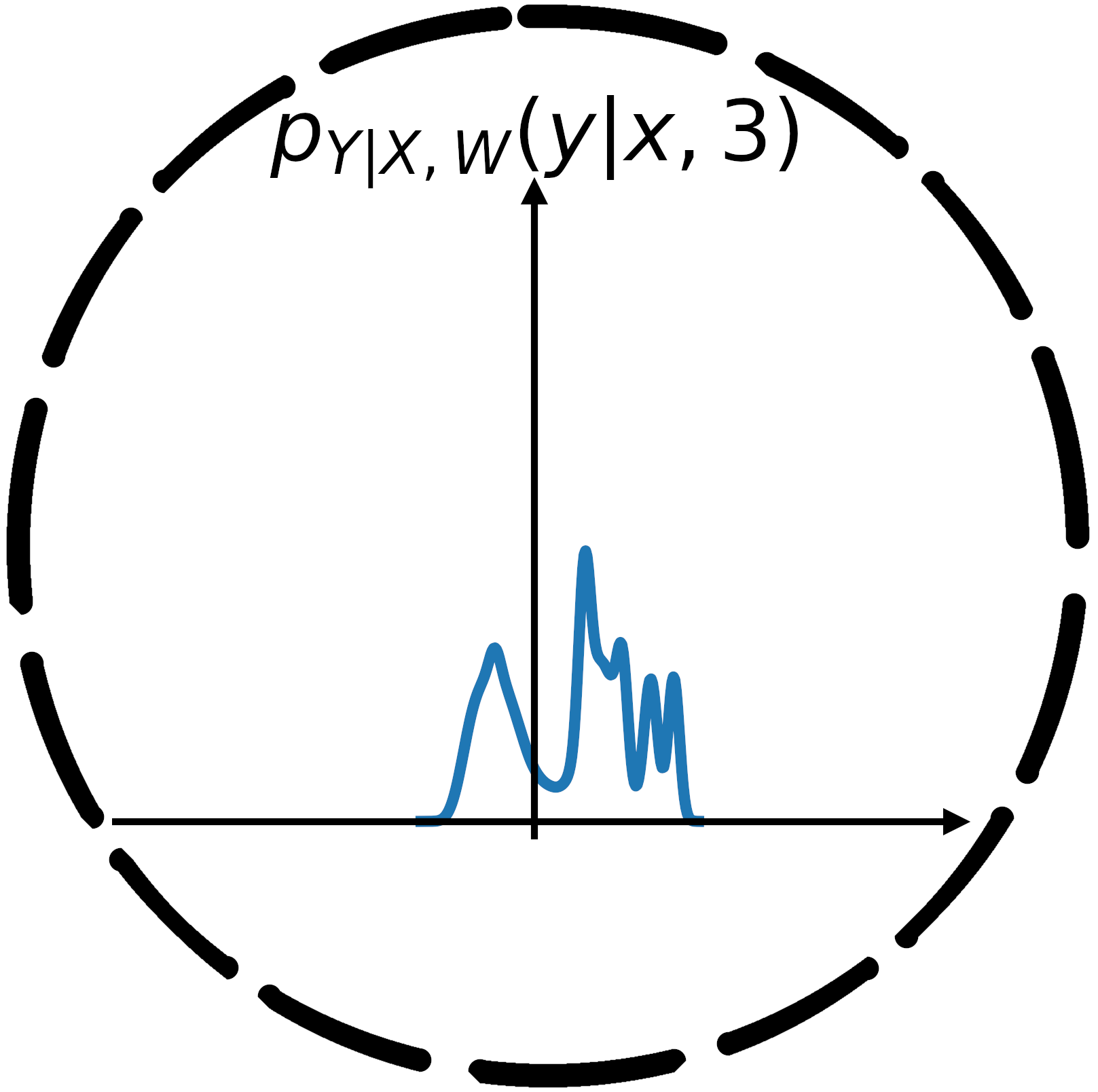}};
\draw[->,thick] (base0) -- node [above]  {$g_{\theta}$}(out0);
\draw[->,thick] (base1) -- node [above]  {$g_{\theta}$}(out1);
\draw[->,thick] (basen) -- node [above]  {$g_{\theta}$}(outn);
\node[rectangle,fill=white] (input) at (4,-5) {Base Distribution};
\node[rectangle,fill=white] (input) at (8,-5) {Output Distribution};
\end{tikzpicture}
\begin{tikzpicture}[scale=0.85]
\node[rectangle,draw] (input) at (0,0) {Input};
\node[inner sep=0pt] (base0) at (4,0)
    {\includegraphics[width=.30\columnwidth]{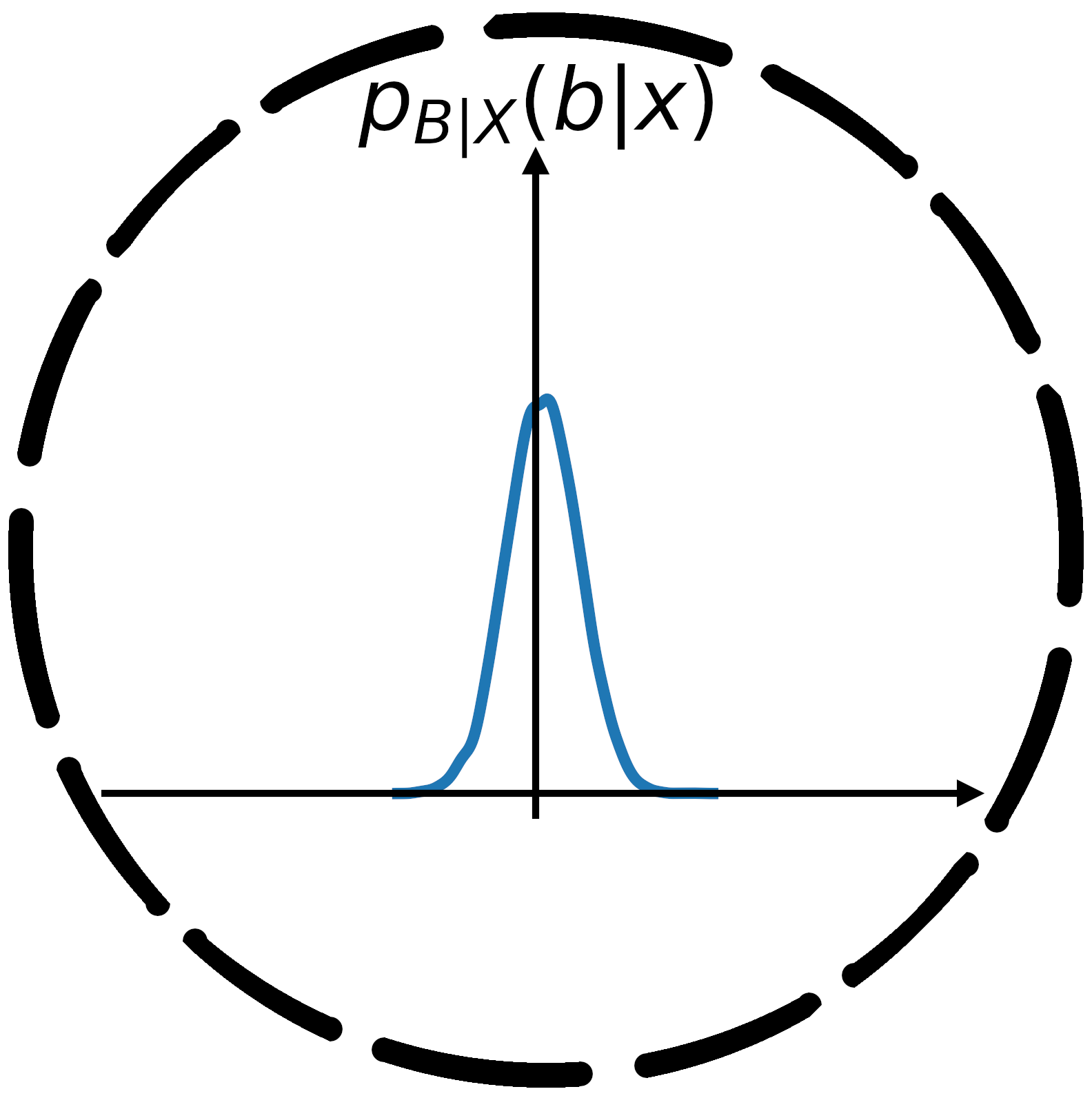}};
\draw[->,thick] (input) -- node [above] {$N(\mu,\Sigma)$}(base0);
\node[inner sep=0pt] (out0) at (8,3)
    {\includegraphics[width=.30\columnwidth]{figures/main_paper/out_0.png}};
\node[inner sep=0pt] (out1) at (8,0)
    {\includegraphics[width=.30\columnwidth]{figures/main_paper/out_1.png}};
\node[inner sep=0pt] (outn) at (8,-3)
    {\includegraphics[width=.30\columnwidth]{figures/main_paper/out_n.png}};
\draw[->,thick] (base0) --  node [above, rotate=35]  {$g_{\theta_1}$}(out0);
\draw[->,thick] (base0) --  node [above]  {$g_{\theta_2}$}(out1);
\draw[->,thick] (base0) --  node [above, rotate=-35]  {$g_{\theta_3}$}(outn);
\node[rectangle,fill=white] (input) at (4,-5) {Base Distribution};
\node[rectangle,fill=white] (input) at (8,-5) {Output Distribution};
\end{tikzpicture}
\caption{The left figure illustrates Nflows Base and the right figure depicts Nflows Out on an ensemble of 3 components with one bijective transformation.}
\label{fig:nflow_models}
\end{figure*}
\begin{figure*}[t]
\centering
\includegraphics[width=0.9\textwidth]{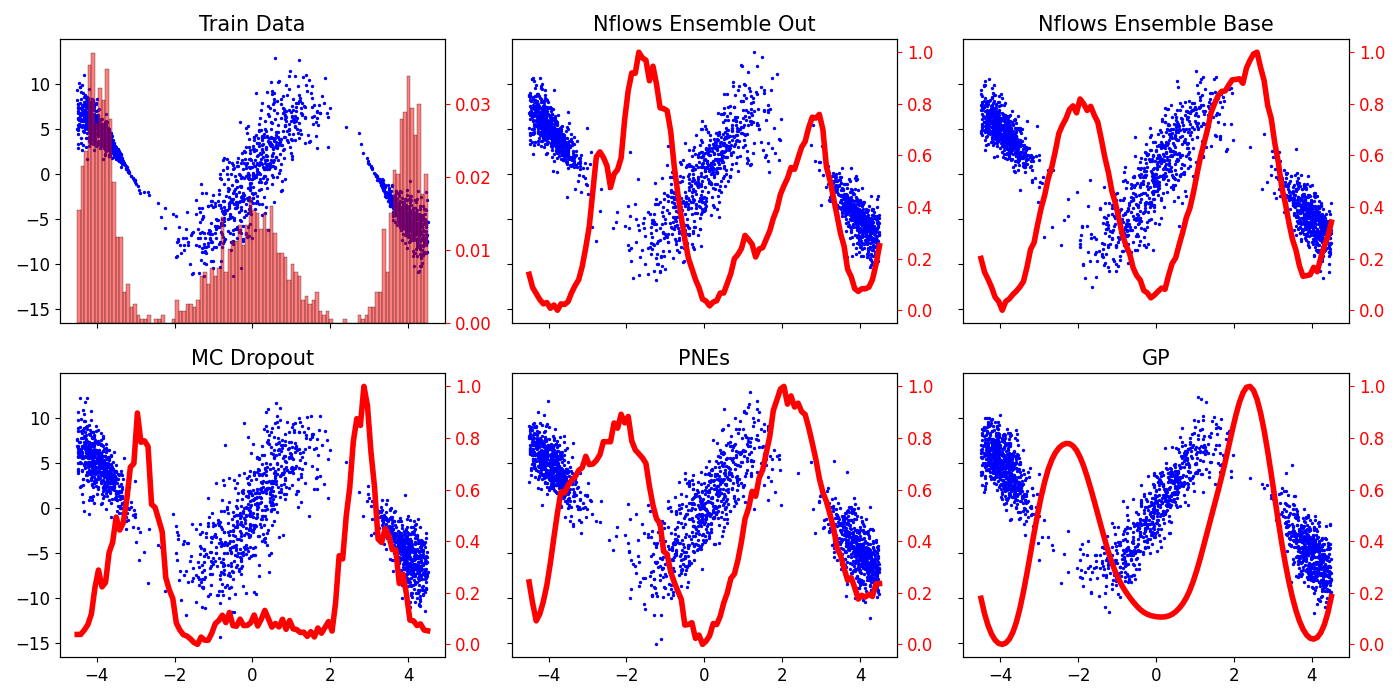} %
\caption{Samples from each model at the corresponding x-values on the \textit{Hetero} Environment are represented by the blue dots and left y-axes. Epistemic uncertainty is represented by the red curve and the right y-axes. Note that the probability density is in the top left graph instead of epistemic uncertainty}
\label{fig:hetero}
\end{figure*}

\subsection{Normalizing Flows}\label{sec:nf}
NFs are non-parametric models that have been shown to be able to fit flexible multi-modal distributions \cite{tabak2010density, tabak2013family}. They do so by transforming a simple continuous distribution (e.g. Gaussian, Beta, etc.) into a more complex one via the change of variable formula. These transformations make it so one can score and sample from the fitted distribution, thus allowing NFs to be applied to a multitude of problems. Let $B$ be a D-dimensional continuous random vector with $p_B(b)$ as its density function and let $Y=g(B)$ where $g$ is invertible, $g^{-1}$ exists, and both $g$ and $g^{-1}$ are differentiable. Using the change of variable formula, we can write the distribution of $Y$ as,
\begin{align}
    p_{Y}(y) & = p_{B}(g^{-1}(y))|\det(J(g^{-1}(y)))|, \label{eq:change_variable}
\end{align}
where $J(\cdot)$ is the Jacobian and $\det$ refers to the determinant. The first term in the product on the RHS of Equation (\ref{eq:change_variable}) is what changes the shape of the distribution while $|\det(J(g^{-1}(y)))|$ normalizes it, forcing it to integrate to one.

NF models can be learned by making $g(\cdot)$ parameterized by $\theta$, i.e., $g_{\theta}(\cdot)$, and then learned via the log likelihood. In addition, NFs can be made conditional \cite{winkler2019learning, ardizzone2019guided}. Following the framework of \citet{winkler2019learning}, Equation (\ref{eq:change_variable}) becomes,
\begin{align}
    p_{Y|X}(y|x) &= p_{B|X}(g^{-1}_{\theta}(y,x))\times\nonumber\\&|\det(J(g^{-1}_{\theta}(y,x)))|,\\
    \log(p_{Y|X}(y|x)) &= \log(p_{B|X}(g^{-1}_{\theta}(y,x))) + \nonumber \\ 
    &\log(|\det(J(g^{-1}_{\theta}(y,x)))|)\label{eq:score_nf}.
\end{align}
Note that now $g^{-1}_{\theta}:Y\times X\mapsto B$ and NF models commonly chain together multiple bijective mappings to make their models more flexible. When fitting a NF, one typically optimizes the negative log likelihood in Equation (\ref{eq:score_nf}) over mini batches. For a complete overview of NFs, please refer to \citet{papamakarios2021normalizing}.

\subsection{Ensembles}\label{sec:ensembles}
Ensembles use multiple models to obtain better predictive performance and to measure uncertainty. The conditional model can be written as,
\begin{align}
    f_{\theta}(y,x)=\sum_{w=1}^M\pi_wf_{\theta_w}(y,x),
\end{align}
where $M$ and $\pi_w$ are the number of model components and the component weights, respectively.

Ensembles are typically generated in one of two ways: randomization \cite{breiman2001random} and boosting \cite{freund1997decision}. Randomization has been preferred method for deep learning models \cite{lakshminarayanan2017simple}. Each model is randomly initialized and then at each training step for a model $w$, a sample, with replacement, is drawn from a training set $\mathcal{D}$ and then a step in the direction of the gradient is taken. This creates diversity as each ensemble component is exposed to a different portion of $\mathcal{D}$ at each step in the gradient. 
\section{Normalizing Flows Ensembles}
We propose two approaches for creating NF ensembles, Nflows Out and Nflows Base, both of which rely on neural spline bijective mappings, $g_{\theta}(\cdot)$, as they have been shown to be very flexible distribution approximators \cite{csf_durkan, nsf_durkan}. Both bagging and random initialization are utilized in training our ensemble components. Each ensemble component is created via fixed dropout masks \cite{durasov2021masksembles}, which reduces the memory and computation cost of our method.
\subsection{Nflows Out}
Nflows Out creates an ensemble in the nonlinear transformations $g$'s,
\begin{align}
    p_{Y|X,W}(y|x,w) & = f_{\theta_w}(y,x)= p_{B|X}(g_{\theta_w}^{-1}(y,x))\times \nonumber\\
    &|\det(J(g_{\theta_w}^{-1}(y,x)))|.  \label{eq:out}
\end{align} 
The base distribution is static for each component and the bijective transformation is where the component variability lies. The network $g_{\theta_w}$ outputs the parameters of cubic spline and thus each ensemble component produces a different cubic spline. \textbf{By including the complex aleatoric uncertainty prediction of an NF as well as the ability of dropout ensembles to capture uncertainty over learned parameters, our method bridges the state-of-the-art of aleatoric representation and epistemic uncertainty estimation.} These new capabilities allows Nflows Out to be applied to new decision making tasks.

Using Nflows Out, we can approximate Equation (\ref{eq:epi}). We employ Monte Carlo sampling to estimate the quantities of interest. Thus, total uncertainty is estimated as,
\begin{align}
    H(y^*|x^*)&=-E\left[\log(p_{Y|X}(y^*|x^*))\right]\label{eq:tot_uncertainty}\\
    &\approx-\frac{1}{N}\sum_{n=1}^N\log(p_{Y|X}(y_n|x^*))\label{eq:tot_uncertainty_estimate},
\end{align}
where $N$ is the number of samples drawn. For a given $x^*$ we sample $N$ points from $p_{B|X}$ and then randomly select an ensemble component $g_w$ to transform each point. The aleatoric uncertainty, $E_{p(w)}\left[H(y^*|x^*,w)\right]$, of Equation (\ref{eq:epi}) is calculated in a similar fashion,
\begin{align}
    \MoveEqLeft[3]E_{p(w)}\left[H(y^*|x^*,w)\right] \notag \\ ={}&-\frac{1}{M}\sum_{w=1}^ME\left[\log(p_{Y|X,W}(y|x^*,w))\right]\label{eq:alea}\\
    \approx{}&-\frac{1}{M}\sum_{w=1}^M\frac{1}{N_w}\Bigl(\nonumber\\
    &\sum_{n_w=1}^{N_w}\log(p_{Y|X,W}(y_{n_w}|x^*,w))\Bigr)\label{eq:alea_estimate}.
\end{align}
For a given $x^*$ and $w$, we sample $N_w$ points from $p_{B|X}$ and then transform the samples according to $g_{\theta_w}$. Monte Carlo estimation can suffer from the curse of dimensionality \cite{rubinstein2009deal}. Experiments and an intuitive proof detailing Monte Carlo's estimation limitations in high dimensions is contained in the Appendix \ref{apdx:mc_high}.

\subsection{Nflows Base}
In an attempt to alleviate the model's reliance on sampling, we propose a second ensembling technique for NFs, Nflows Base. The ensembles are created in the base distribution,
\begin{align}
    p_{Y|X,W}(y|x,w) &= f_{\theta_w}(y,x) = p_{B|X,W}(g_{\theta}^{-1}(y,x))\times\nonumber\\
    &|\det(J(g_{\theta}^{-1}(y,x)))|, \label{eq:base}
\end{align}
where $p_{B|X,W}(b|x,w)=N(\mu_w, \Sigma_w)$, $\mu_w$ and $\Sigma_w$ are estimated via a neural network with fixed dropout masks. In Nflows Base, the bijective mapping is static for each component while the base distribution varies. Figure \ref{fig:nflow_models} depicts Nflows Base on the left and Nflows Out on the right. 

\textbf{Nflows Base has advantages when estimating uncertainty as the component variability is contained in the base distribution and thus analytical formulae can be used to approximate aleatoric uncertainty.} Equation (\ref{eq:alea_estimate}) becomes,
\begin{align}
    \approx{}\frac{1}{M}\sum_{w=1}^M\frac{1}{2}\log(\det(2\pi\Sigma_w)).\label{eq:alea_estimate_base}
\end{align} 
These advantages are seen in memory reduction, computationally efficiency, and estimation error. Let $N_x$ denote the number of $x^*$ to estimate uncertainty for. Then Nflows Out needs to sample $TS$ points where $TS=N_xN_wM$ and the samples are also used to estimate Equation (\ref{eq:tot_uncertainty_estimate}) ($N_{out}=N_wM$). On the other hand, for Nflows Base, $TS=N_xN_{base}$ points are needed to estimate uncertainty, where $N_{base}=\frac{N_{out}}{M}$. This reduces the number of samples needed to drawn by a factor $M$. Please refer to the Appendix \ref{apdx:mc_high} for analysis of time savings. In addition to reducing the sampling required, there is less estimation error as component-wise entropy can be computed directly from Equation (\ref{eq:alea_estimate_base}) instead of through sampling. For nonparametric models, it is uncommon to estimate aleatoric uncertainty without sampling as the output distribution does not hold a parametric form. Note we can use the base distribution to estimate epistemic uncertainty as MI is invariant to bijective mappings \cite{kraskov2004estimating}. A proof is contained in Appendix \ref{apdx:MI_invariance}.

\begin{figure*}[t]
\centering
\includegraphics[width=0.9\textwidth]{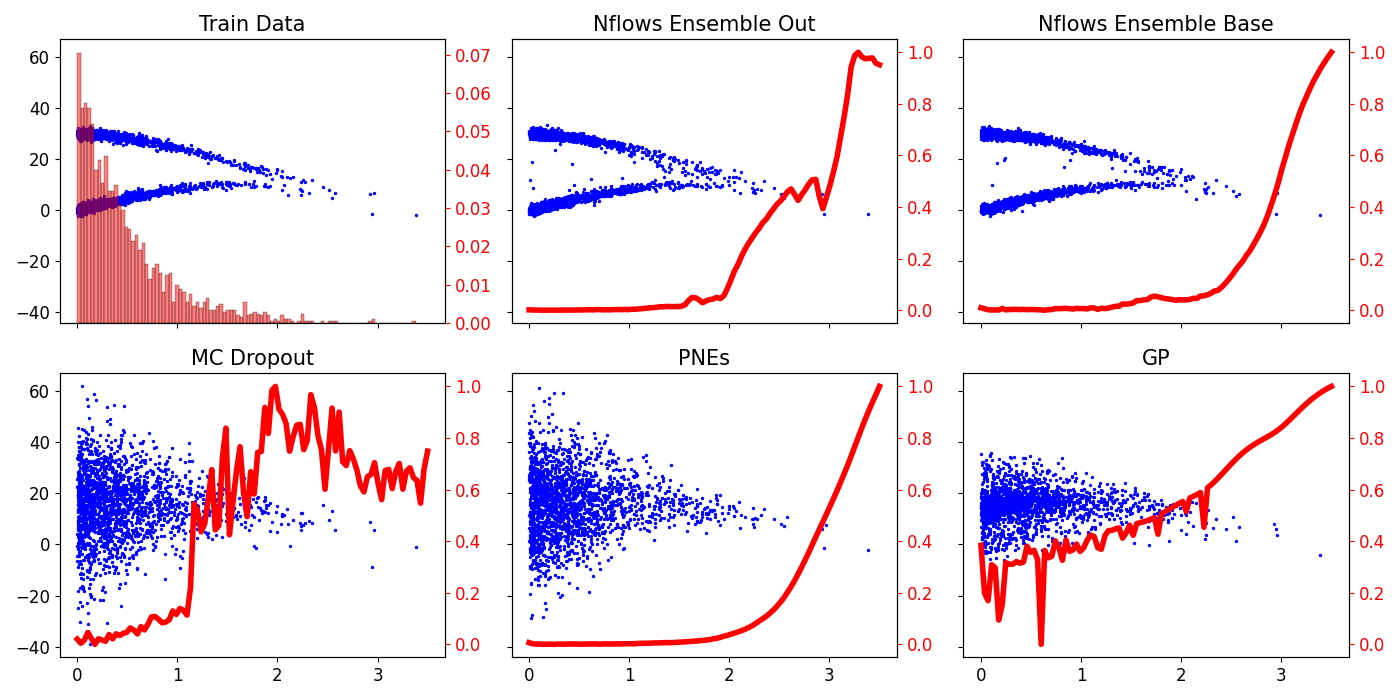} %
\caption{Samples from each model at the corresponding x-values on the \textit{Bimodal} Environment are represented by the blue dots and left y-axes. Epistemic uncertainty is represented by the red curve and the right y-axes. Note that the probability density is in the top left graph instead of epistemic uncertainty}
\label{fig:bimodal}
\end{figure*}

\section{Baseline Models}
We have included several baselines and compared each method's ability to measure aleatoric and epistemic uncertainty. These baselines are detailed below.
\subsection{Probabilistic Network Ensembles}
Probabilistic network ensembles (PNEs) have been shown to be a powerful tool to measure uncertainty for neural networks \cite{NEURIPS2018_3de568f8, kurutach2018model}. We are particularly interested in capturing their capabilities at measuring aleatoric and epistemic uncertainty in supervised learning tasks. PNEs were created with fixed dropout masks with each component modeling a Gaussian, 
\begin{align}
    p_{Y|X}(y|x) = \frac{1}{M}\sum_{w=1}^Mp_{Y|X,W}(y|x,w).\label{eq:pen}
\end{align}
The model is then trained via negative log likelihood, with randomly initialized weights and bootstrapped samples from the training set. We estimate epistemic uncertainty for PNEs via the same method for Nflows Base, Equations (\ref{eq:tot_uncertainty_estimate}) and (\ref{eq:alea_estimate_base}).

\subsection{Monte Carlo Dropout}

In addition to ensembles, other Bayesian approximations exist in the deep learning community. MC dropout is one of the more prevalent and commonly used Bayesian approximations \cite{gal2016dropout, gal2017deep, kirsch2019batchbald}.  MC dropout creates ensembles via dropout during training and uses dropout at test time to estimate uncertainty. The output distribution is therefore similar to PNEs in Equation (\ref{eq:pen}). However, we cannot sample each mask at test time, and thus a random sample of masks needs to be drawn. Therefore, when estimating uncertainty for MC dropout we first sample a set of masks and then sample each Gaussian corresponding to each mask. After which, we apply Equations (\ref{eq:tot_uncertainty_estimate}) and (\ref{eq:alea_estimate_base}) to measure uncertainty. Note the each mask has equal probability, as the dropout probability was set to 0.5.
\subsection{Gaussian Processes}
Gaussian Processes (GPs) are Bayesian models widely used to quantify uncertainty \cite{rasmussen2003gaussian}. A GP model can be fully defined by its mean function $m(\cdot)$ and a positive semidefinite covariance function/kernel $k(\cdot,\cdot)$ of a real process $f(x)$, 
\begin{align}
    m(x) &= E\left[f(x)\right], \nonumber\\
    k(x,x') &= E\left[(f(x)-m(x))(f(x')-m(x'))\right].
\end{align}
Choosing the mean and covariance function allows a practitioner to input prior knowledge into the model. From these choices, the predictive posterior becomes,
\begin{align}
    E_f[x^*] &= m(x^*)=k_*^T(K+\sigma_w^2I)^{-1}y\\
    var_f[x^*] &= k_{**}-k_*^T(K+\sigma_w^2I)k_*,
\end{align}
where $k_*=k(X,x^*)$, $k_{**}=k(x^*,x^*)$, $K$ is the kernel matrix with entries $K_{ij}=k(x_i,x_j)$, and $\sigma_w^2$ is a noise variance hyperparameter. Note that $X$ and $y$ refer to the complete set of training data. GPs place a probability distribution over functions that are possible fits over the data points. This distribution over functions is used to express uncertainty and is used to quantify epistemic uncertainty.   
\section{Experiments}
We first evaluate each method on two 1D environments, previously proposed \cite{depeweg2018decomposition}, to compare whether each method can capture multi-modal and heteroscedastic noise while measuring epistemic uncertainty. In addition, we provide 5 active learning problems, both 1D and multi-dimensional. Both openai gym and nflows libraries were utilized with minor changes \cite{brockman2016openai, nflows}. The model from \citet{depeweg2016learning,depeweg2018decomposition} is Bayesian Neural Network that is limited to mixture Gaussians and is thus not as expressive NFs for aleatoric uncertainty. In addition, there is no source code for it and was not included as a baseline, though we have included MC dropout as a close comparison. For all model hyper-parameters please refer to the Appendix \ref{apdx:model_hyperparameters} and the code can be found at https://github.com/nwaftp23/nflows\_epistemic.
\subsection{Data}
\begin{figure*}[t]
\centering
\includegraphics[width=0.9\textwidth]{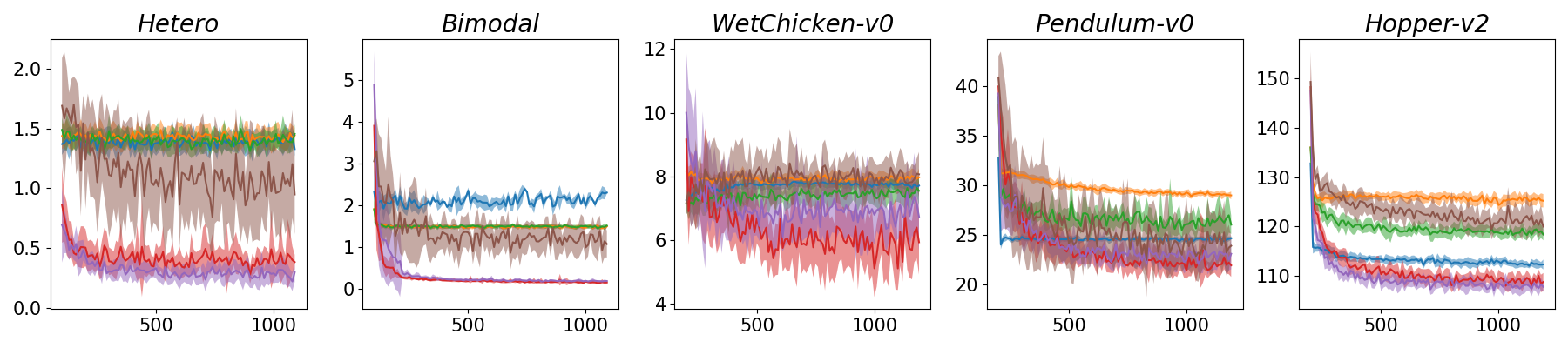} %
\begin{tabular}{c c c c c c}
\raisebox{0.7ex}{\colorbox{blue_gp}{ }} GP & \raisebox{0.7ex}{\colorbox{orange_mc_drop}{ }} MC Drop  & \raisebox{0.7ex}{\colorbox{green_pne}{ }} PNE & \raisebox{0.7ex}{\colorbox{brown_nflows}{ }} Nflows & \raisebox{0.7ex}{\colorbox{purple_nflows_out}{ }} Nflows Out & \raisebox{0.7ex}{\colorbox{red_nflows_base}{ }} Nflows Base
\end{tabular}

\caption{Mean KL divergence on 50 randomly sampled test set inputs as data was added to the training sets ($\frac{1}{50}\sum_{i=1}^{50}D_{KL}\infdivx{P_i}{Q_i}$, where $P_i$ is the ground truth conditional distribution and $Q_i$ is the model's conditional distribution conditioned on $x_i$).}
\label{fig:actlearn_allmodels}
\end{figure*}
In order to assess our uncertainty estimation, we evaluate on two 1D benchmarks, \textit{Hetero} and \textit{Bimodal} \cite{depeweg2018decomposition}. The \textit{Hetero} data can be seen in Figure \ref{fig:hetero} in the top left pane. There are two valleys of low data density displayed in the red bar chart where one would expect high epistemic uncertainty. The \textit{Bimodal} data can be seen in Figure \ref{fig:bimodal} in the top left pane. Density of data drops off when moving right along the x-axis, thus you would expect epistemic uncertainty to grow as x does. For complete details involved in generating the data, please refer to the Appendix \ref{apdx:1d_envs}. Both 1D environments were included to provide proof of concept for uncertainty estimation and visualizations.

In addition to the 1D environments, we validated our methods across three multi-dimensional environments. Trajectories were gathered from an agent and then the transition function for each environment was estimated, $f(s_{t},a_{t}) =s_{t+1}$. The first, \textit{Wet Chicken} \cite{wetchicken}, is commonly used to evaluate a model's capacity to fit multi-modal and heteroscedastic noise \cite{depeweg2018decomposition, depeweg2016learning}. It simulates a canoe approaching the edge of a waterfall. The paddlers are enticed to the edge of the waterfall as that is the region with the most fish, but as they get closer the probability increases that they fall over the edge and start over. Hence towards the edge of the waterfall the transitions become bimodal. The dynamics are naturally stochastic and are governed by the equations contained in the Appendix \ref{apdx:wet_chicken}. \textit{Wet Chicken} was included to assess uncertainty estimation on an intrinsically stochastic multi-dimensional environment.

Moreover, we evaluated all methods on \textit{Pendulum} \cite{brockman2016openai} and \textit{Hopper} \cite{todorov2012mujoco}. These environments are included because they are commonly used in benchmarking and provide us a higher dimensional output space to validate our methods. These environments are inherently nonstochactisic and thus noise was injected into the dynamics in order to produce multi-modal noise. The noise is applied to each action, $a'_{t}=a_{t}+a_{max}\epsilon$, where epsilon is drawn from a multi-modal distribution and $a_{max}$ refers to the maximum value in the action space. Note that the parameters used to create the noise distribution are included in the Appendix \ref{apdx:stochastic_mujoco} and that $a_t$ is recorded in the replay buffer, not $a'_t$.

\subsection{1D Fits}
\begin{table*}[t]
\centering
\begin{tabular}{l|l|l|l|l|l|l|l}
      Env  &   Acq Batch  & GP & PNEs & MC Drop& Nflows & Nflows Out & Nflows Base \\
      \midrule
\multirow{4}{*}{\textit{Hetero}} & 10  &    1.43±0.12 &    1.39±0.06 &     1.43±0.1 &    1.54±0.23 &    \textbf{0.48±0.09} &    \textbf{0.51±0.17} \\
          & 25  &    1.43±0.11 &     1.44±0.1 &    1.43±0.08 &     1.3±0.45 &    \textbf{0.31±0.09} &    \textbf{0.43±0.11} \\
          & 50  &    1.41±0.07 &    1.39±0.07 &     1.46±0.1 &    1.39±0.21 &    \textbf{0.27±0.08} &    \textbf{0.36±0.09} \\
          & 100 &    1.33±0.06 &    1.45±0.08 &    1.44±0.07 &    0.95±0.33 &     \textbf{0.3±0.08} &   \textbf{ 0.38±0.06} \\

       \midrule
\multirow{4}{*}{\textit{Bimodal}} & 10  &    2.23±0.18 &    1.49±0.06 &    1.49±0.03 &    1.26±0.81 &    0.74±0.76 &    \textbf{0.36±0.07} \\
          & 25  &    2.02±0.07 &    1.46±0.05 &    1.47±0.04 &     1.21±0.6 &    \textbf{0.24±0.04} &     \textbf{0.2±0.03} \\
          & 50  &    1.97±0.07 &     1.5±0.03 &    1.49±0.02 &     1.2±0.45 &    \textbf{0.22±0.03} &    \textbf{0.18±0.03} \\
          & 100 &     2.3±0.03 &    1.51±0.05 &    1.49±0.05 &    1.07±0.32 &    \textbf{0.18±0.02} &    \textbf{0.14±0.02}\\
       \midrule
\multirow{4}{*}{\textit{Wet Chicken}} & 10  &    7.61±0.21 &    \textbf{7.14±0.24} &    7.93±0.27 &    8.04±1.07 &     7.83±1.2 &    7.67±1.36 \\
          & 25  &    7.73±0.17 &    7.49±0.47 &     8.02±0.3 &    8.19±1.03 &    \textbf{6.64±0.79} &    \textbf{6.25±0.98} \\
          & 50  &    7.81±0.12 &    7.61±0.44 &    7.95±0.18 &    8.12±0.78 &    \textbf{6.51±0.56} &    \textbf{5.86±0.92} \\
          & 100 &    7.71±0.18 &    7.55±0.46 &    7.97±0.28 &     8.06±0.7 &    \textbf{6.73±1.06} &    \textbf{5.93±1.01}\\
       \midrule
\multirow{4}{*}{\textit{Pendulum-v0}} & 10  &   \textbf{24.56±0.21} &   27.29±0.73 &   31.04±0.41 &   26.45±4.61 &   27.62±1.87 &   26.07±2.22 \\
          & 25  &    \textbf{24.52±0.3} &   26.43±1.05 &   30.11±0.29 &   \textbf{24.86±3.66} &   \textbf{24.13±1.18} &   \textbf{23.97±2.16} \\
          & 50  &   24.68±0.26 &   26.47±1.19 &    29.6±0.26 &   24.44±3.04 &   \textbf{22.86±1.63} &   \textbf{22.45±1.29} \\
          & 100 &   24.67±0.17 &   26.04±0.94 &    29.0±0.39 &    23.9±0.93 &   23.09±1.56 &   \textbf{21.93±1.17} \\
          \midrule
\multirow{4}{*}{\textit{Hopper-v2}} & 10  &   114.8±0.97 &  122.42±1.22 &  125.66±0.98 &  126.87±2.83 &  \textbf{112.79±1.18} &   114.6±2.14 \\
          & 25  &  113.29±0.62 &   120.3±1.56 &  125.99±0.84 &  123.93±1.84 &   \textbf{109.31±2.0} &  \textbf{110.64±1.59} \\
          & 50  &  112.98±0.71 &  119.82±1.66 &  125.65±0.88 &  122.36±1.36 &  \textbf{108.59±1.04} &  \textbf{109.44±1.87} \\
          & 100 &   112.27±1.0 &   118.4±1.21 &  125.21±1.36 &  119.97±1.91 &  \textbf{107.74±0.99} &  \textbf{108.71±1.83} \\
       \bottomrule
\end{tabular}
\caption{KL Divergence of 50 randomly sampled test inputs ground truth distribution and the corresponding model distribution. Experiments were across ten different seeds and the results are expressed as mean plus minus one standard deviation. Best models are bolded.}
\label{tbl:kl}
\end{table*}
First, we turn to our 1D environments to give motivation and empirical proof of the epistemic uncertainty estimation. In Figure \ref{fig:hetero}, we can see that each model does a reasonable job of capturing the narrow and widening of the ground truth distribution for the \textit{Hetero} environment. Moreover, Figure \ref{fig:hetero} displays each model's capacity to estimate epistemic uncertainty in the red curve and right y-axis. Each model does a good job of capturing higher epistemic uncertainty in the two regions with fewer data points. It is not surprising to see GPs, PNEs, and MC dropout perform well on \textit{Hetero}, as each method has been shown to accurately fit heteroscedastic Gaussian noise.

\textbf{While the baseline methods were sufficiently capable of capturing the \textit{Hetero} data, this not the case for the \textit{Bimodal} setting.} Figure \ref{fig:bimodal} shows that none of the non-NF models can fit both modes. This is to be expected as GPs, PNEs and MC dropout fit a Gaussian at each $x$. While ensembles such as PNEs and MC dropout can theoretically capture multiple modes, there is no guarantee of this. To guarantee capturing multiple modes, deep learning engineering is required. Once can either manipulate of the loss function or separate the training data according to each mode, both of which may require prior knowledge. On the other hand, NFs are able to fit multiple modes directly via the log likelihood. Note that some fine tuning of the number of transformations and transformation types is required.

In addition to the aleatoric uncertainty estimation, Figure \ref{fig:bimodal} displays each model's ability to capture epistemic uncertainty on the \textit{Bimodal} data with the red curve on the right y-axis. \textbf{Nflows Out and Nflows Base are the only models to capture both modes while maintaining accurate epistemic uncertainty estimates.} Each model shows the pattern of increasing uncertainty where the data is more scarce with MC dropout having the most trouble displaying this pattern. MC dropout had the most difficulty to estimate epistemic uncertainty on most tasks. This is likely the case because, as opposed to the PNEs or the NFs methods, the number of ensemble components is generally quite larger ($2^n$ where $n$ is the number of neurons in your network). Therefore, when estimating Equation (\ref{eq:epi}), a subset of masks needs to be sampled, leading to less stable estimates of uncertainty.

\subsection{Active Learning}

In order to assess each model's capability at utilizing epistemic and capturing flexible aleatoric uncertainty, we provide an active learning experimental setup. For both 1D and multi-dimensional experiments, each model started with 100 and 200 data points, respectively. At each epoch, the models sampled 1000 new inputs and kept the 10 with the highest epistemic uncertainty. The models are then evaluated by sampling 50 inputs from the test set and averaging the Kullback-Liebler (KL) divergence for the ground truth distribution at those points and model's distributions,
\begin{align}
    \frac{1}{50}\sum_{i=1}^{50}D_{KL}\infdivx{P_i}{Q_i}
\end{align}
Note that the KL divergences reported were estimated via samples using the $k$-nearest-neighbor method \cite{knnkl}. KL divergence was chosen as an evaluation metric as we are most interested in distributional fit. In order to ensure variation, the training and test were gathered via different processes; refer to the Appendix \ref{apdx:train_vs_test} for more details. All experiments were run across 10 seeds and their mean and standard deviation are reported.

Figure \ref{fig:actlearn_allmodels} displays the performance of each method as the training size increases. \textbf{For each data setting, the NF ensemble models reach lower KLs, thus verifying that they can leverage epistemic uncertainty estimation to learn more expressive aleatoric uncertainty faster.} In some cases other models provided better results with small number of data points, this information is conveyed in Table \ref{tbl:kl}, with the best-performing models in bold at different acquisition epochs. Note that in addition to the baselines discussed, we included an NF with no ensembles, using total entropy as the acquisition function. 

\section{Related Work}

Using Bayesian methods, researchers have developed information based criterion for the problem of active learning using Gaussian Processes (GPs) on classification problems \cite{houlsby2011bayesian}. Researchers have leveraged said information based criterion for uncertainty estimation with Bayesian neural networks \cite{gal2017deep, kendall2017uncertainties, kirsch2019batchbald}. These works extended previous epistemic uncertainty estimation by leveraging dropout to estimate uncertainty on image classification for neural networks. In contrast, our work estimates epistemic uncertainty on a harder output space. The experiments contained in this paper were conducted on regression problems where the output is drawn from continuous distributions in 1-11 dimensions, whereas the previous works applied their methods to classification problems, a 1D categorical output.

NFs have been shown to be poor estimators of epistemic uncertainty \cite{kirichenko2020normalizing, zhang2021understanding}. Researchers have argued that NFs, inability to differentiate between in and out distribution samples via their likelihood is a methodological shortcoming. Some have found workarounds to this problem, specifically in the sphere of ensembles \cite{choi2018waic}. Ensembles have been shown to be a powerful tool in the belt of a machine learning practitioner by leveraging similar uncertainty quantification benefits to their Bayesian cousins but at a smaller computational footprint \cite{lakshminarayanan2017simple}. The work regarding NFs and uncertainty have focused on image generation and unsupervised learning. Our methods differ, as we consider supervised learning problems \cite{winkler2019learning}. In addition, the ensembles created in \citet{choi2018waic} contrast with ours as we leverage the base distribution to estimate our uncertainty, use MI instead of WAIC and create our ensembles with less memory.

In addition to the examples discussed, work has been done to quantify epistemic uncertainty for regression problems \cite{depeweg2018decomposition, postels2020hidden}. \citet{depeweg2016learning} method's relied on Bayesian approximations to neural networks which modeled mixture of Gaussians and demonstrated their ability to capture uncertainty on three environments. Our work expands on this, by developing more expressive NF models for uncertainty estimation. \citet{postels2020hidden} develop theory to show how latent representations of a proxy network can be used to estimate uncertainty. They use the proxy network's latent representation as their conditioner. In contrast, we show to how to do this with one NF model and how to leverage the base distribution to be more sample efficient. This study provided an array of multi-modal problems while \citet{postels2020hidden} considered a single uni-modal problem. Our work expands on both these papers by providing a comprehensive analysis of different baseline methods, comparing their uncertainty quantification and including higher dimensional data. In addition, we provide a full active learning experimental setup and develop new NF frameworks for measuring uncertainty that are more sample efficient and have lower memory costs.

\section{Conclusion}
In this paper, we introduced NF ensembles via fixed dropout masks and demonstrated how they can be used efficiently to quantify uncertainty. In doing so, we show how to leverage the base distribution to estimate uncertainty more sample efficiently. Moreover, Nflows Base shows that one can accurately measure uncertainty in the base distribution space. We empirically show that our models outperform the state-of-the-art in capturing the combination of aleatoric and epistemic uncertainty on 5 regression tasks. This paper shows that NF ensembles are an expressive model for aleatoric uncertainty while keeping the benefits of previous methods for capturing epistemic uncertainty. 

\section*{Acknowledgements}
Prof. Meger was supported by the National Sciences and Engineering Research Council (NSERC), through the NSERC Canadian Robotics Network (NCRN).
\bibliography{refs.bib}
\clearpage
\appendix
\section{Appendix}
Please find the supplementary math and figures contained below.

\subsection{Model Hyper-Parameters}
\label{apdx:model_hyperparameters}
The PNEs used an architecture of three hidden layers, each with 50 units and ReLU activation functions. MC dropout models had five hidden layers, each with 400 hidden units and ReLU activation functions. This remained constant across all experimental settings. The hyper-parameters for Nflows Out, Nflows Base and Nflows varied across experiments and therefore are described in Table \ref{tbl:hyper-params}. Note that the Nflows hyper-parameters were chosen such that on average they had the same number of parameters as the ensemble components, as the dropout weight was 0.5. The GP used a Radial Basis Function kernel for its covariance function and constant mean for its mean function. We trained all models with 16GB ram on Intel Gold 6148 Skylake @ 2.4 GHz CPUs and NVidia V100SXM2 (16G memory) GPUs. This was accomplished thanks to the Digital Research Alliance of Canada.

PNEs, Nflows Out and Nflows Base were run with five ensemble components for each experimental setting. To estimate uncertainty at each x conditioned on Nflows Base sampled 1000, Nflows Out 5000, PNEs 5000 and MC drop 2500 points. MC Dropout sampled 20 dropout masks upon which to estimate uncertainty. We found that sampling more became computationally prohibitive and did not increase accuracy of the uncertainty estimates. 
\begin{table}[ht!]
\centering
\begin{tabular}{l|l|l|l|l}
      Model  & Env & Hids & Units & $g$'s\\
      \midrule
\multirow{4}{*}{Nflows} &  \textit{Hetero} &  1  & 10 & 1 \\
          & \textit{Bimodal} &  1 &  100 & 1 \\
          & \textit{WetChicken} &  1 &  100 & 1 \\
          & \textit{Pendulum-v0} &  2 &  10 & 2 \\
          & \textit{Hopper-v2} &  2 &  10 & 2 \\
      \midrule
\multirow{4}{*}{Nflows Out} &  \textit{Hetero} &  1  & 20 & 1 \\
          & \textit{Bimodal} &  1 &  200 & 1 \\
          & \textit{WetChicken} &  1 &  200 & 1 \\
          & \textit{Pendulum-v0} &  2 &  20 & 2 \\
          & \textit{Hopper-v2} &  2 &  20 & 2 \\

      \midrule
\multirow{4}{*}{Nflows Base} &  \textit{Hetero} &  1  & 20 & 1 \\
          & \textit{Bimodal} &  1 &  200 & 1 \\
          & \textit{WetChicken} &  1 &  200 & 1 \\
          & \textit{Pendulum-v0} &  2 &  20 & 2 \\
          & \textit{Hopper-v2} &  2 &  20 & 2 \\
      \midrule
\multirow{4}{*}{MC Drop} &  \textit{Hetero} &  5  & 400 & NA \\
          & \textit{Bimodal} &  5 &  400 & NA \\
          & \textit{WetChicken} &  5 &  400 & NA \\
          & \textit{Pendulum-v0} &  5 &  400 & NA \\
          & \textit{Hopper-v2} &  5 &  400 & NA \\
      \midrule
\multirow{4}{*}{PNEs} &  \textit{Hetero} &  3  & 50 & NA \\
          & \textit{Bimodal} &  3 &  50 & NA \\
          & \textit{WetChicken} &  3 &  50 & NA \\
          & \textit{Pendulum-v0} &  3 &  50 & NA \\
          & \textit{Hopper-v2} &  3 &  50 & NA \\
       \bottomrule
\end{tabular}
\caption{Hyper-Parameters for models.}
\label{tbl:hyper-params}
\end{table}

\subsection{1D Environments}
\label{apdx:1d_envs}
The \textit{Hetero} data was generated via sampling $n$ points from a categorical distribution with 3 values where $p_i=\frac{1}{3}$ and then drawing $x$ from one of three different Gaussians ($N(-4,\frac{2}{5})$, $N(0,\frac{9}{10})$, $N(4,\frac{2}{5})$) depending on the value of $n$. Then $y$ was generated as,
\begin{align}
    y = 7\sin(x)+3z\left|\cos\left(\frac{x}{2}\right)\right| \label{eq:hetero_data}.
\end{align}
The \textit{Bimodal} data was generated by sampling $x$ from an exponential with $\lambda=2$, then $n$ was sampled from a Bernoulli with $p=0.5$ and according to the value of $n$,
\begin{align}
    y = \begin{cases} 
      10\sin(x)+z & n =0 \\
      10\cos(x)+z+20-x & n=1 
   \end{cases}.
\end{align}
Note that for both \textit{Bimodal} and \textit{Hetero} data $z \sim N(0,1)$.
\subsection{Wet Chicken}
\label{apdx:wet_chicken}
The environment can be viewed as a group of paddlers, the agent, are paddling a canoe on a 2D river. The paddlers position, the state, at time $t$ is represented by $(x_t, y_t)$. The river has a length $l$ and width $w$, both of which were set to 5. At the end of the river, in the y-direction, there is a waterfall which forces you to return to the origin $(0,0)$ when going over. Higher rewards can be found at the edge of the river, $r_t=-(l-y_t)$, and thus the paddlers are incentivized to get as close to edge as possible. The agent can take actions $(a_{t,x}, a_{t,y}) \in [-1,1]^2$ which represents the magnitudes of paddling in each direction. In addition to the action, there are turbulences $s_t$ and drifts $v_t$ which behave stochastically.

The dynamics are governed by the following system of equations,
\begin{align*}
    v_t &= \frac{3}{w}x_t\\
    s_t &= 3.5 - v_t\\
    x_{t+1} &=\begin{cases} 
      0 & x_t+a_{t,x}<0 \\
      0 & \hat{y}_{t+1}>l\\
      w & x_t+a_{t,x}>w\\
      x_t+a_{t,x} & o/w
   \end{cases}\\
   y_{t+1} &=\begin{cases} 
      0 & y_t+a_{t,y}<0 \\
      0 & \hat{y}_{t+1}>l\\
      \hat{y}_{t+1} & o/w
   \end{cases}
\end{align*}
where $\hat{y}_{t+1}=y_t+(a_{t,y}-1)+v_t+s_t\tau_t$ and $\tau_t\sim \text{Unif}(-1,1)$. As the paddlers approach the edge of the waterfall, there transition becomes increasingly bimodal as they might fall over and return to the start. Furthermore the transitions are heteroscedastic, as $x_t$ decreases the effect of $\tau_t$ increases.
\begin{figure}[ht]
\centering
\includegraphics[width=0.9\columnwidth]{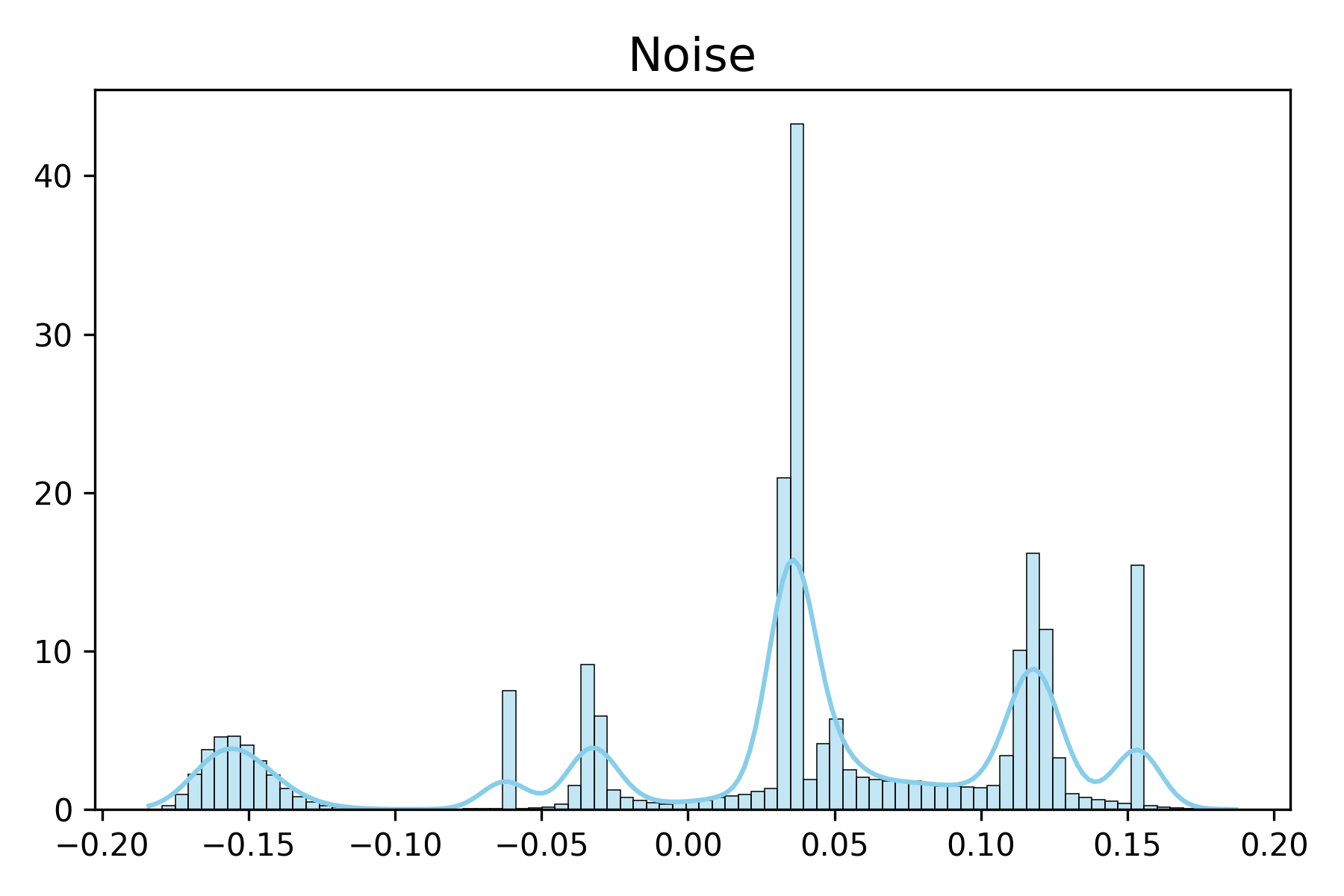} %
\caption{Sample from noise distribution that was used to introduce stochasticity into Hopper and Pendulum.}
\label{fig:noise_openai}
\end{figure}
\begin{table*}[t]
\centering
\begin{tabular}{l|l|l|l|l|l|l|l}
      Env  &   Acq Batch  & GP & PNEs & MC\_drop& Nflows & Nflows\_out & Nflows\_base \\
      \midrule
\multirow{4}{*}{\textit{Hetero}} & 10  &   5.92±0.1 &   5.89±0.17 &   5.69±0.17 &  5.57±0.28 &  \textbf{2.01±0.12} &    \textbf{2.17±0.1} \\
          & 25  &  5.95±0.06 &   5.91±0.13 &   5.66±0.14 &  4.74±1.57 &  \textbf{1.93±0.05} &   \textbf{2.22±0.32} \\
          & 50  &  5.96±0.04 &   5.88±0.06 &   5.63±0.14 &   5.3±0.63 &  \textbf{1.92±0.08} &   \textbf{2.15±0.16} \\
          & 100 &  5.96±0.04 &   5.91±0.09 &   5.67±0.13 &  4.18±1.37 &  \textbf{1.93±0.07} &   \textbf{2.19±0.18}\\

       \midrule
\multirow{4}{*}{\textit{Bimodal}} & 10  &  9.19±0.24 &  10.22±0.24 &   10.3±0.27 &  7.69±0.87 &   \textbf{6.63±0.1} &   \textbf{6.61±0.06} \\
          & 25  &   9.3±0.12 &  10.12±0.19 &  10.22±0.19 &  9.12±2.13 &  \textbf{6.62±0.08} &   \textbf{6.67±0.06} \\
          & 50  &  9.23±0.08 &  10.11±0.14 &  10.19±0.16 &  9.62±2.97 &  \textbf{6.63±0.11} &   \textbf{6.64±0.06} \\
          & 100 &  9.17±0.07 &  10.11±0.13 &  10.15±0.12 &  9.01±1.47 &  \textbf{6.63±0.04} &    \textbf{6.72±0.1}\\
       \midrule
\multirow{4}{*}{\textit{WetChicken}} & 10  &  1.35±0.06 &   1.33±0.09 &   1.38±0.08 &  1.32±0.18 &  \textbf{0.85±0.07} &   \textbf{0.89±0.09} \\
          & 25  &  1.38±0.03 &   1.36±0.07 &   1.39±0.04 &   1.37±0.2 &  \textbf{0.88±0.12} &   \textbf{0.96±0.17} \\
          & 50  &  1.39±0.03 &   1.38±0.09 &   1.42±0.04 &  1.41±0.19 &  \textbf{0.88±0.09} &   \textbf{0.89±0.08} \\
          & 100 &  1.38±0.02 &    1.4±0.08 &   1.45±0.07 &  1.42±0.21 &   \textbf{0.9±0.05} &   \textbf{0.94±0.07} \\
       \midrule
\multirow{4}{*}{\textit{Pendulum-v0}} & 10  &   0.57±0.0 &   0.68±0.08 &    1.0±0.06 &  0.46±0.38 &   \textbf{0.15±0.1}&   \textbf{0.17±0.14} \\
          & 25  &   0.57±0.0 &   0.61±0.02 &   0.84±0.05 &  0.28±0.39 &  \textbf{0.08±0.04} &   \textbf{0.09±0.05} \\
          & 50  &   0.56±0.0 &    0.6±0.04 &   0.75±0.02 &  0.38±0.44 &  \textbf{0.05±0.03}&   \textbf{0.06±0.03} \\
          & 100 &   0.56±0.0 &    0.6±0.03 &   0.71±0.01 &   0.2±0.14 &  \textbf{0.05±0.02} &   \textbf{0.06±0.04}\\
          \midrule
\multirow{4}{*}{\textit{Hopper-v2}} & 10  &  1.81±0.01 &   1.87±0.05 &   1.86±0.05 &  1.44±0.76 &  \textbf{0.48±0.09} &    \textbf{0.59±0.1} \\
          & 25  &   1.81±0.0 &   1.86±0.05 &    1.91±0.1 &  1.06±0.21 &  \textbf{0.39±0.06} &   \textbf{0.42±0.09} \\
          & 50  &   1.81±0.0 &   1.85±0.03 &   1.94±0.06 &  1.05±0.19 &  \textbf{0.33±0.03} &   \textbf{0.36±0.04} \\
          & 100 &   1.81±0.0 &   1.83±0.01 &   1.98±0.08 &  0.96±0.18 &  \textbf{0.29±0.02} &   \textbf{0.31±0.03} \\
       \bottomrule
\end{tabular}
\caption{A sample of $\hat{y}$'s is drawn from the model, given $x$, and then the RMSE between the sample and the groundtruth is calculated. This was done for 50 pairs $(x,y)$ in the test set. Experiments were across ten different seeds and the results are expressed as mean plus minus one standard deviation.}
\label{tbl:rmse}
\end{table*}
\begin{figure*}[ht!]
\centering
\includegraphics[width=0.9\textwidth]{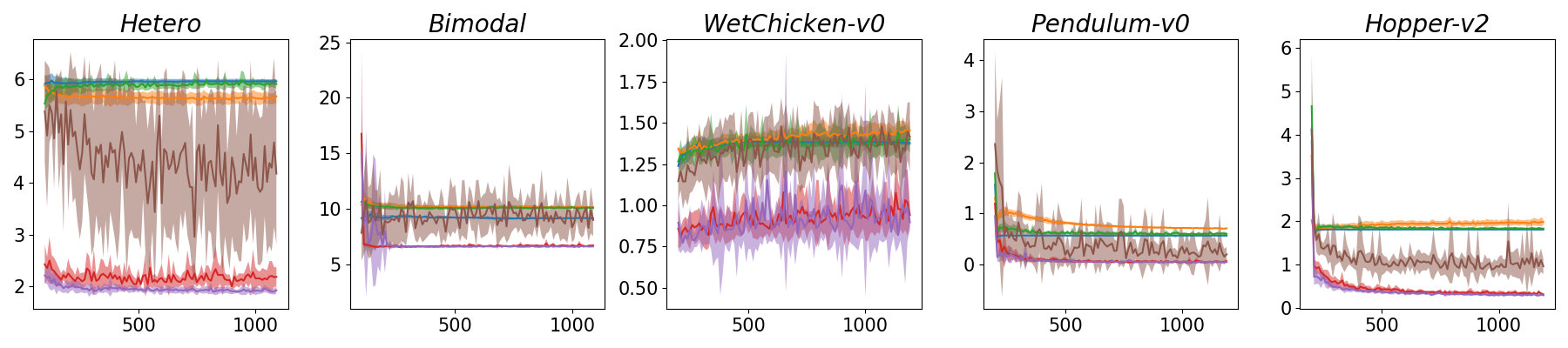} %
\begin{tabular}{c c c c c c}
\raisebox{0.7ex}{\colorbox{blue_gp}{ }} GP & \raisebox{0.7ex}{\colorbox{orange_mc_drop}{ }} MC Drop  & \raisebox{0.7ex}{\colorbox{green_pne}{ }} PNE & \raisebox{0.7ex}{\colorbox{brown_nflows}{ }} Nflows & \raisebox{0.7ex}{\colorbox{purple_nflows_out}{ }} Nflows Out & \raisebox{0.7ex}{\colorbox{red_nflows_base}{ }} Nflows Base
\end{tabular}

\caption{Mean RMSE on 50 randomly sampled test set inputs as data was added to the training sets.}
\label{fig:actlearn_allmodels_rmse}
\end{figure*}
\subsection{Hopper \& Pendulum}
\label{apdx:stochastic_mujoco}
Stochasticity was introduced into both environments via a mixture of Gaussians. The mixture weights were sampled from a Dirichlet distribution where $\alpha$ was set to a vector of ones, the means of the Gaussians were randomly drawn from a $N(0,1)$ and the standard deviations of the Gaussians were randomly drawn from a $N(0,0.5)$ and then squared.
\begin{align*}
    \pi&=[0.062, 0.128 , 0.177, 0.001, 0.032, 0.273, \\&0.062, 0.033, 0.067, 0.022, 0.142]\\
    \mu&=[0.508, -2.059,  1.355 , -0.675,  0.504, \\&0.358, -0.332, -0.647, 2.029, -0.294, 0.868]\\
    \sigma&=[0.274, 0.276, 0.067, 0.131, 0.028, \\&0.008, 0.024, 0.002, 0.008 , 0.083, 0.574]
\end{align*}
Note $\pi$ denotes the mixture weights, $\mu$ the means and $\sigma$ the standard deviation. There are a total of 11 components and the resulting distribution was mapped through $\frac{1}{1+e^{-x}}$ to guarantee the action bounds. Figure \ref{fig:noise_openai} shows a sample from the distribution. The distribution was chosen to have more modes than others to offer more diversity in our experiments.

\subsection{Train \& Test Sets}
\label{apdx:train_vs_test}
For \textit{Wet Chicken}, \textit{Pendulum-v0} and \textit{Hopper-v2} replay buffers were gathered for both the Train and Test sets. In order to ensure diversity between both sets, the test set was gathered by trained SAC policy while the training set was gathered with a random policy. The SAC policy was trained from the following repository https://github.com/pranz24/pytorch-soft-actor-critic. For \textit{Hetero} and \textit{Bimodal} the training sets were gathered as described whereas the test sets were gathered uniformly at random.

\begin{table*}[t]
\centering
\begin{tabular}{l|l|l|l|l|l|l|l}
      Env  &   Acq Batch  & GP & PNEs & MC\_drop& Nflows & Nflows\_out & Nflows\_base \\
      \midrule
\multirow{4}{*}{\textit{hetero}} & 10  &  -2.08±0.02 & \textbf{1.01±0.09} &   \textbf{0.9±0.19}& -0.04±0.16 & \textbf{0.98±0.2} & \textbf{1.04±0.21} \\
          & 25  &  -2.06±0.01 & \textbf{1.07±0.08} & \textbf{0.97±0.14} &                0.25±0.4 &    \textbf{1.18±0.18} &   \textbf{1.11±0.12} \\
          & 50  &  -2.06±0.01 &   1.13±0.03 &  1.02±0.12 &               0.23±0.14 &    \textbf{1.33±0.11} &   \textbf{1.21±0.11} \\
          & 100 &  -2.06±0.01 &   \textbf{1.23±0.06} &    1.1±0.1 &               0.61±0.36 &    \textbf{1.32±0.09} &   \textbf{1.25±0.07}\\

       \midrule
\multirow{4}{*}{\textit{bimodal}} & 10  &  -4.97±0.04 &   0.62±0.04 &  0.58±0.05 &  -1.3e7.0±3.7e7 &   -1.82±7.76 &   \textbf{1.13±0.18} \\
          & 25  &   -5.0±0.03 &   0.64±0.03 &  0.61±0.03 &        -3.3e3±9.5e3 &    \textbf{1.37±0.04} &   \textbf{1.38±0.04} \\
          & 50  &   -5.0±0.06 &   0.67±0.03 &  0.62±0.03 &               0.42±0.46 &     \textbf{1.4±0.05} &   \textbf{1.45±0.04} \\
          & 100 &  -4.96±0.03 &   0.68±0.02 &  0.64±0.03 &                0.56±0.3 &    \textbf{1.46±0.02} &   \textbf{1.47±0.01}\\
       \midrule
\multirow{4}{*}{\textit{WetChicken}} & 10  &   0.17±0.19 &   \textbf{1.22±0.23} &  0.44±0.16 &  0.62±0.84 &  -7.3±6.98 & -0.77±3.12 \\
          & 25  &   0.26±0.06 &   \textbf{1.29±0.22} &  0.65±0.12 &             0.5±0.67 &   -0.11±3.03 &   \textbf{1.77±1.52} \\
          & 50  &   0.28±0.05 &   1.27±0.29 &  0.81±0.05 &               0.63±0.57 &     \textbf{1.9±0.36} &   \textbf{2.75±0.99} \\
          & 100 &   0.31±0.01 &    1.19±0.3 &   0.87±0.2 &               0.81±0.67 &    \textbf{1.86±1.36} &   \textbf{2.49±1.15} \\
       \midrule
\multirow{4}{*}{Pendulum-v0} & 10  &    3.05±0.0 &   \textbf{6.11±0.59} &  2.49±0.25 &           -79.21±106.47 &  -0.95±13.34 &   5.74±3.07 \\
          & 25  &    3.07±0.0 &    7.4±0.67 &  3.46±0.31 &             -42.66±73.4 &     \textbf{8.92±1.1} & \textbf{9.27±1.68} \\
          & 50  &    3.08±0.0 &    7.5±0.78 &  4.12±0.18 &               3.93±7.02 &    \textbf{9.78±2.56} & \textbf{10.57±0.92} \\
          & 100 &     3.1±0.0 &   7.72±0.72 &   4.58±0.1 &               7.86±1.54 &    \textbf{9.82±1.82} & \textbf{11.27±1.0} \\
          \midrule
\multirow{4}{*}{\textit{Hopper-v2}} & 10  &   \textbf{30.5±0.36} &  17.52±1.14 &  6.92±0.85 &            -33.35±28.65 &   19.94±4.18 &  14.01±2.32 \\
          & 25  &   \textbf{32.4±0.08} &  21.84±0.76 &  7.06±0.93 &           -69.32±139.11 &   \textbf{31.03±1.59} &  24.42±1.76 \\
          & 50  &   \textbf{33.0±0.07} &  22.32±1.66 &  7.66±1.11 &              -4.39±8.12 &   \textbf{31.87±5.13} &   28.1±0.91 \\
          & 100 &  \textbf{33.69±0.05} &  24.12±1.13 &  8.02±1.29 &                 5.4±4.7 &   \textbf{33.48±1.13} &  29.92±1.35 \\
       \bottomrule
\end{tabular}
\caption{Log Likelihood of a held out test set during training at different acquisition batches. Experiments were across ten different seeds and the results are expressed as mean plus minus one standard deviation.}
\label{tbl:likelihood}
\end{table*}
\begin{figure*}[ht!]
\centering
\includegraphics[width=0.9\textwidth]{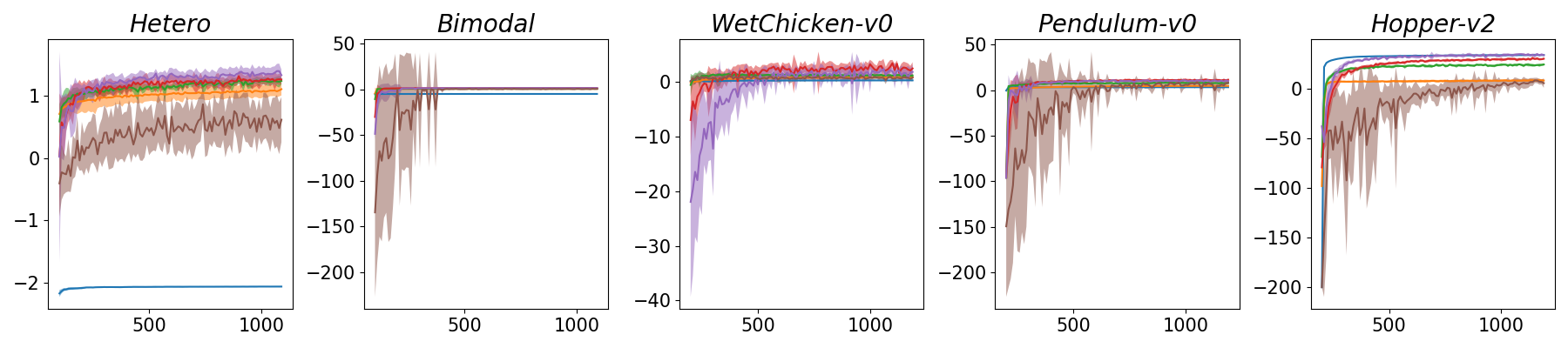} %
\begin{tabular}{c c c c c c}
\raisebox{0.7ex}{\colorbox{blue_gp}{ }} GP & \raisebox{0.7ex}{\colorbox{orange_mc_drop}{ }} MC Drop  & \raisebox{0.7ex}{\colorbox{green_pne}{ }} PNE & \raisebox{0.7ex}{\colorbox{brown_nflows}{ }} Nflows & \raisebox{0.7ex}{\colorbox{purple_nflows_out}{ }} Nflows Out & \raisebox{0.7ex}{\colorbox{red_nflows_base}{ }} Nflows Base
\end{tabular}

\caption{Mean Log Likelihood on 50 randomly sampled test set inputs as data was added to the training sets.}
\label{fig:actlearn_allmodels_log_likeli}
\end{figure*}
\subsection{Additional Results}
\label{apdx:more_results}
In addition to the KL metric reported in the text, we evaluated our method on RMSE and Log Likelihood. The RMSE is reported in Table \ref{tbl:likelihood} and the active learning curves in Figure \ref{fig:actlearn_allmodels_rmse}. In all cases Nflows Out and Nflows Base perform favorably to existing methods. Though, \textit{WetChicken} seems to be a particularly difficult environment for each model to reduce its RMSE as they acquire more data. A similar effect was seen for the KL divergence, for PNEs, MC Drop and GPs, and we believe this is because \textit{WetChicken} has the most interesting stochasticity, both heteroscedastic and multimodal, and thus is difficult to model.

The Log Likelihood results are reported in Table \ref{tbl:likelihood} and the active learning curves in Figure \ref{fig:actlearn_allmodels_log_likeli}. As before Nflows Out and Nflows Base perform favorably to the other baseline models except for the \textit{Hopper-v2} environment. We believe that GPs perform as well in this setting as the noise is homoscedastic. Despite that it is not Gaussian noise, a Gaussian approximation performs well enough.

We thought it good practice to include RMSE and Log Likelihood evaluations as they are common. Though, we found KL Divergence the most apt evaluation metric as we are most interested in distributional fit. We want to accurately capture modes to be able to best capture rare events for safety reasons. Nflows Out and Nflows Base perform favorably across all three metrics. 

\subsection{Acquisition Criteria}
\label{apdx:acq_crit}
In addition to validating our models against baselines, we also evaluated different acquisition criteria. Figure \ref{fig:actlearn_nflows_out} shows the different learning curves across the three metrics on the \textit{Bimodal} environment for Nflows Out. Using the epistemic uncertainty outperforms all other acquisition criteria. Note that aleatoric uncertainty seems to not improve our model by sampling new data points. This is the case because there is high aleatoric uncertainty in the train set and thus data gets data from the same region. 

\begin{figure*}[ht!]
\centering
\includegraphics[width=0.9\textwidth]{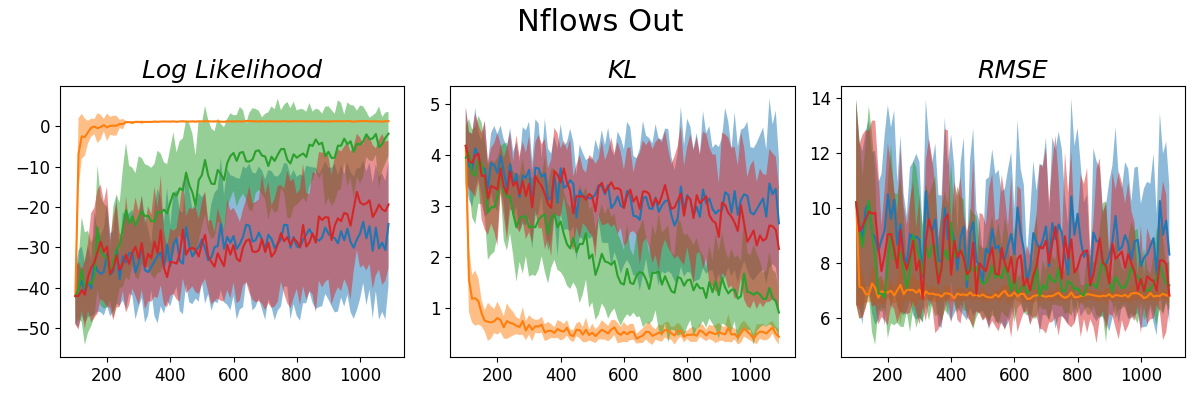} %
\begin{tabular}{c c c c}
\raisebox{0.7ex}{\colorbox{blue_gp}{ }} Aleatoric Unc. & \raisebox{0.7ex}{\colorbox{orange_mc_drop}{ }} Epistemic Unc.  &  \raisebox{0.7ex}{\colorbox{red_nflows_base}{ }} Total Unc. &
\raisebox{0.7ex}{\colorbox{green_pne}{ }} Random 
\end{tabular}

\caption{Active Learning curves for each metric reported for the Bimodal Environment with Nflows Out. Each color represents a different acquisition criteria.}
\label{fig:actlearn_nflows_out}
\end{figure*}

Figure \ref{fig:actlearn_nflows_base} depicts the same learning curves for Nflows Base. In this set of graphs, we can see that epistemic uncertainty calculated in the base distribution, Epistemic Unc. Base, and the output distribution, Epistemic Unc. Out, perform very similarly. Thus validating the use of the base distribution to estimate uncertainty. Both figures show that random accumulation of data is superior to that of aleatoric and total uncertainty.

\subsection{Monte Carlo High Dimensions}
\label{apdx:mc_high}
One can think of Monte Carlo estimation as randomly throwing darts at a box with a known area $A$ and multiplying $A$ by the proportion of darts that lie under the function of interest. In higher dimensions, we can think of this as the proportion of darts that land in the hypersphere when randomly throwing at the hypercube. As the number of dimensions $d$ increases the ratio of the volume of the hypersphere to the volume of the hypercube approaches zero.
\begin{proof}
We wish to show that $\lim_{d \to \infty}\frac{V(S)}{V(C)}=0$, where $V(S)$ and $V(C)$ are the volumes of a hypersphere and hypercube respectively. Let $R$ denote the length of the radius and side length of the hypersphere and hypercube. Note that we are choosing the largest such hypersphere that can fit into our hypercube. Thus,
\begin{align}
    \lim_{d \to \infty}\frac{V(S)}{V(C)} &= \lim_{d \to \infty}\frac{\frac{R^d\pi^{\frac{d}{2}}}{\Gamma(\frac{d}{2}+1)}}{R^d} \nonumber\\
    &=\lim_{d \to \infty}\frac{\pi^{\frac{d}{2}}}{\Gamma(\frac{d}{2}+1)} \nonumber\\
    &=\lim_{d \to \infty}\frac{\pi^{\frac{d}{2}}}{\frac{d}{2}\Gamma(\frac{d}{2})} \label{eq:prop_gamma}\\
    &=0\label{eq:gamma_faster}
\end{align}
One can get to line (\ref{eq:prop_gamma}) via the properties of the gamma function and to line (\ref{eq:gamma_faster}) by the fact that $\lim_{d \to \infty}\Gamma(d)>>a^d$, where $a$ is a constant. 
\end{proof}
\noindent Therefore in higher dimensions one requires more samples to get an accurate estimate as it will be harder to throw a dart into the hypersphere. Figure \ref{fig:monte_carlo} captures Monte Carlo's error in higher dimensions. As the the number of dimensions increase so to does the error between the Monte Carlo estimated entropy and the true entropy. In addition, we show the time savings of Nflows Base over Nflows Out. This can be seen in Table \ref{tbl:time_savings}.
 \begin{figure*}[ht!]
\centering
\includegraphics[width=0.9\textwidth]{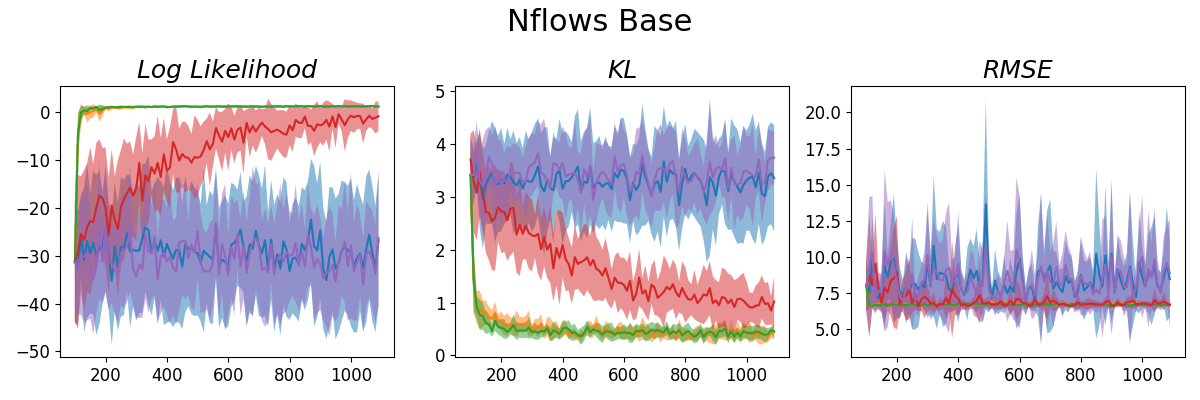} %
\begin{tabular}{c c c c c}
\raisebox{0.7ex}{\colorbox{blue_gp}{ }} Aleatoric Unc. & \raisebox{0.7ex}{\colorbox{orange_mc_drop}{ }} Epistemic Unc. Base & \raisebox{0.7ex}{\colorbox{green_pne}{ }} Epistemic Unc. Out & 
\raisebox{0.7ex}{\colorbox{purple_nflows_out}{ }} Total Unc. &
\raisebox{0.7ex}{\colorbox{red_nflows_base}{ }} Random 
\end{tabular}

\caption{Active Learning curves for each metric reported for the Bimodal Environment with Nflows Base. Each color represents a different acquisition criteria.}
\label{fig:actlearn_nflows_base}
\end{figure*}
 \begin{figure}[ht]
\centering
\includegraphics[width=0.9\columnwidth]{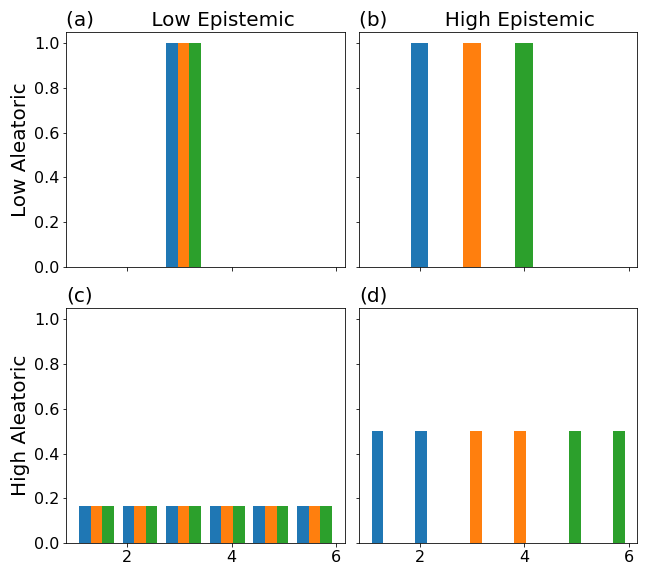} %
\caption{Each color represents another model's PMF for 4 different weighted dice experiments.}
\label{fig:epi_intuition}
\end{figure}
\subsection{Aleatoric vs Epistemic Uncertainty}
\label{apdx:intuition_alea_epi}
If one were to roll a fair dice, the probability mass function (PMF) is known and thus all uncertainty is known. This example represents aleatoric uncertainty. A dice with an unknown weight, the PMF is unknown but one could be estimated via samples. This is an example of epistemic uncertainty. Figure \ref{fig:epi_intuition} shows the four scenarios of uncertainty for different weighted dice experiments. Each graph depicts three models represented by different colors:
\begin{figure}[t]
\centering
\includegraphics[width=0.9\columnwidth]{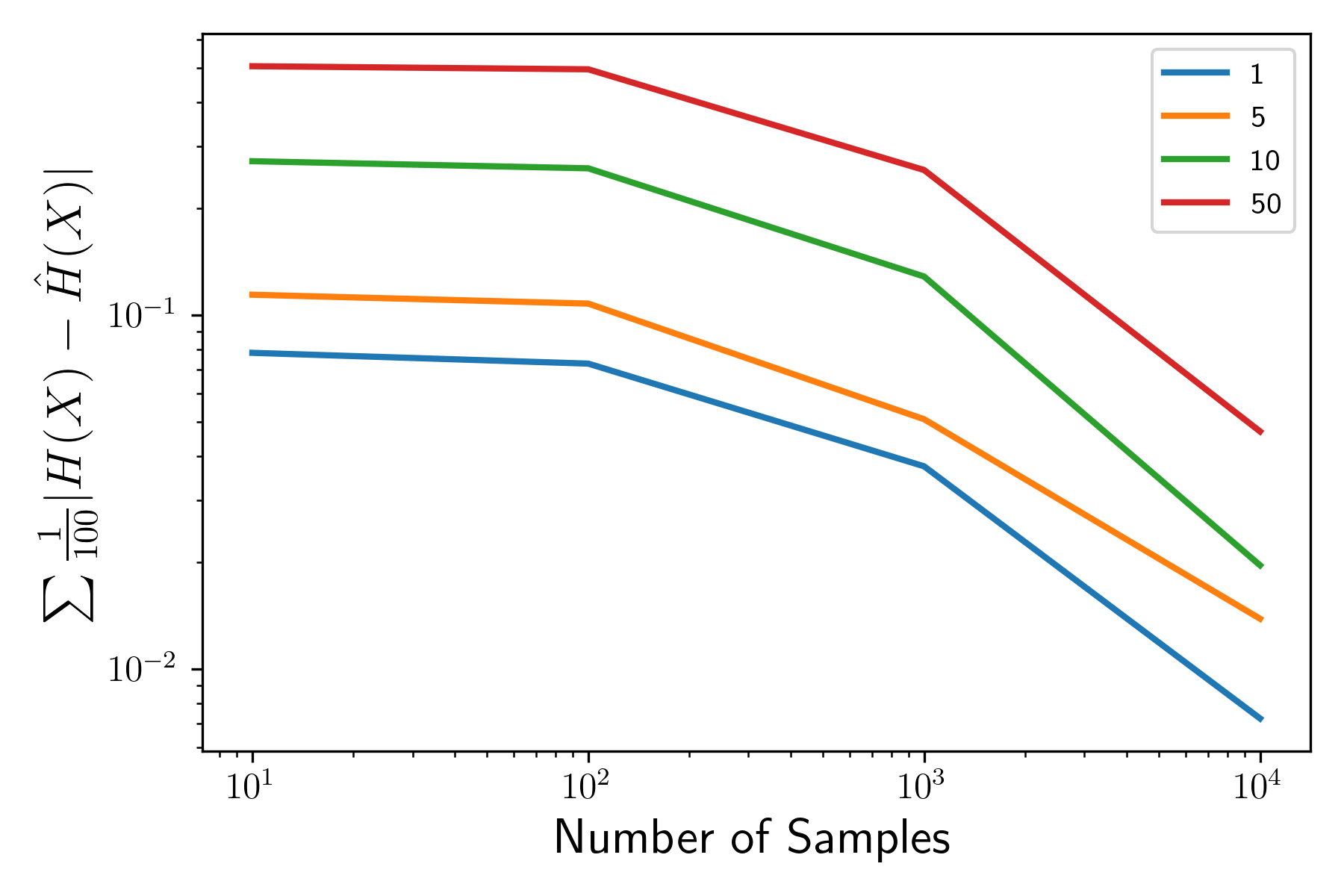} %
\caption{Error in estimating entropy for Nflows Base using a sampling based method $\hat{H}(X)$ vs. an analytical method $H(X)$ with increasing dimensions on a Log-Log scale. Note the results were run across 100 seeds and averaged.}
\label{fig:monte_carlo}
\end{figure}

\begin{table}[ht!]
\centering
\begin{tabular}{l|l}
      Env  &   Time Savings \\
      \midrule
\textit{hetero}& 44.64±1.18\\
       \midrule
\textit{bimodal} & 44.45±0.84 \\
       \midrule
\textit{WetChicken} & 49.00±5.42 \\
       \midrule
\textit{Pendulum-v0} & 64.77±5.06 \\
          \midrule
\textit{Hopper-v2} & 161.11±6.59 \\ 
       \bottomrule
\end{tabular}
\caption{Time savings in seconds of Nflows Base vs Nflows Out run over 10 seeds for one acquisition of uncertainty estimation. Mean plus minus one standard deviation reported.}
\label{tbl:time_savings}
\end{table}

 \begin{itemize}
     \item[(a)] Shows an instance of low epistemic and low aleatoric. All three models agree and their PMFs put their mass on one outcome.
     \item[(b)] Depicts an instance of high epistemic and low aleatoric. All three models disagree and their PMFs put their mass on one outcome (2, 3, or 4).
     \item[(c)] Displays an instance of low epistemic and high aleatoric. All three models agree and their PMFs put their mass equally on each outcome.
     \item[(d)] Shows an instance of high epistemic and high aleatoric. All three models disagree and their PMFs put their masses across all possible outcomes.
 \end{itemize}
Thus in order to collect data efficiently, one would most want to gather data for graphs (b) and (d) instead of (a) and (c). Despite the fact that graph (c) has high entropy, it is unlikely that more data could improve the models as they agree.

\subsection{MI Invariant to Diffeomorphisms}
\label{apdx:MI_invariance}
\begin{theorem}
\label{thm:PaiDE}
Let $y=g(b)$ where $g$ is a diffeomorphism, i.e. it is a bijection which is differentiable with a differentiable inverse. Then $I(Y,W)=I(B,W)$ where $W$ is a random variable representing ensemble components.
\end{theorem}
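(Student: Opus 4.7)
The plan is to establish the identity via the change-of-variables formula for differential entropy and then show that the Jacobian contributions cancel when forming the mutual information. Crucially, the diffeomorphism $g$ does not depend on $w$ (this matches Nflows Base, where the bijection is static across components), so the Jacobian term is the same functional of $B$ in both the marginal and conditional cases.

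First, I would recall the standard consequence of Equation (\ref{eq:change_variable}): if $Y = g(B)$ with $g$ a diffeomorphism, then substituting $y = g(b)$ with $dy = \lvert\det J_g(b)\rvert\,db$ in $H(Y) = -\int p_Y(y)\log p_Y(y)\,dy$ and using $\lvert\det J_{g^{-1}}(g(b))\rvert = 1/\lvert\det J_g(b)\rvert$ gives
\begin{equation*}
H(Y) \;=\; H(B) \;+\; E_B\bigl[\log\lvert\det J_g(B)\rvert\bigr].
\end{equation*}
The identical derivation, now applied to the conditional density $p_{B|W}(\cdot\mid w)$ (and its pushforward $p_{Y|W}(\cdot\mid w)$), yields
\begin{equation*}
H(Y\mid W=w) \;=\; H(B\mid W=w) \;+\; E_{B\mid W=w}\bigl[\log\lvert\det J_g(B)\rvert\bigr].
\end{equation*}

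Second, I would take an expectation of the conditional identity with respect to $p(w)$ and subtract it from the marginal identity. This gives
\begin{equation*}
I(Y;W) \;=\; \bigl[H(B) - E_W[H(B\mid W)]\bigr] \;+\; E_B[\log\lvert\det J_g(B)\rvert] \;-\; E_W E_{B\mid W}[\log\lvert\det J_g(B)\rvert].
\end{equation*}
The first bracket is exactly $I(B;W)$, and the two Jacobian expectations are equal by the tower property (the joint $(B,W)$ marginalizes to $B$), so they cancel and the theorem follows.

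The main subtlety is ensuring that all differential entropies involved are finite so that the subtractions above are legitimate rather than indeterminate $\infty-\infty$ forms. In the Nflows Base setting this is benign: the base is a non-degenerate Gaussian with finite $H(B)$ and $H(B\mid W=w)$, and the spline bijection has a well-behaved, integrable log-Jacobian. If one wants to avoid any such finiteness bookkeeping, an alternative route is to use the KL representation $I(Y;W) = D_{KL}\infdivx{p_{Y,W}}{p_Y\,p_W}$ and invoke the invariance of KL divergence under the bijective reparameterization $(b,w)\mapsto(g(b),w)$, which I would sketch as a robust backup argument before concluding.
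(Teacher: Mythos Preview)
Your argument is correct, but the paper takes a more direct route that is essentially your ``backup'' KL approach: it writes $I(Y,W)=\sum_w\int p_{Y,W}(y,w)\log\frac{p_{Y,W}(y,w)}{p_Y(y)p_W(w)}\,dy$, applies the change of variables $y=g(b)$ to both $p_{Y,W}$ and $p_Y$, and observes that the common factor $\lvert\det J_{g^{-1}}(y)\rvert$ cancels inside the log ratio, leaving $I(B,W)$. Your primary route via the entropy identity $H(Y)=H(B)+E_B[\log\lvert\det J_g(B)\rvert]$ is equally valid but, as you yourself note, requires finiteness of the individual entropies to justify the subtraction; the paper's direct computation sidesteps this entirely because the Jacobian disappears \emph{before} any integral is split. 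What your approach buys is a transparent decomposition showing exactly which piece of the entropy is transformation-dependent and which is not, which is pedagogically nice and connects cleanly to Equation~(\ref{eq:alea_estimate_base}); the paper's approach is shorter and needs no regularity caveats.
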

\begin{proof}
    Using the change of variable formula in probability yields:
    \begin{align*}
    p_{Y}(y)&=p_{B}(g^{-1}(y))|\det(J_{g^{-1}}(y))| \\
    p_{Y,W}(y,w)&=p_{B,W}(g^{-1}(y),W)|\det(J_{g^{-1}}(y))|
    \end{align*}
    Applying these definitions,
    \begin{align*}
    I(Y,W)&=\sum_w\int_yp_{Y,W}(y,w)\log\frac{p_{Y,W}(y,w)}{p_{Y}(y)p_{W}(w)}dy\\
    &=\sum_w\int_bp_{B,W}(b,w)\log\frac{p_{B,W}(b,w)|\det(J_{g^{-1}}(y))|}{p_{Y}(y)p_{W}(w)}db\\
    &=\sum_w\int_bp_{B,W}(b,w)\log\frac{p_{B,W}(b,w)}{p_{B}(b)p_{W}(w)}db\\
    &=I(B,W).
    \end{align*}
\end{proof}

\end{document}